\def\eqref#1{equation~\ref{#1}}
\def\1{\bm{1}}
\DeclareMathAlphabet{\mathsfit}{\encodingdefault}{\sfdefault}{m}{sl}
\SetMathAlphabet{\mathsfit}{bold}{\encodingdefault}{\sfdefault}{bx}{n}
\definecolor{colorcite}{RGB}{35,140,160}
\definecolor{newcontent}{RGB}{232,79,10}
\theoremstyle{plain}
\newtheorem{theorem}{Theorem}[section]
\newtheorem{lemma}[theorem]{Lemma}
\theoremstyle{definition}
\newtheorem{definition}[theorem]{Definition}
\theoremstyle{remark}
\title{Optimal Brain Apoptosis}
\author{Mingyuan Sun\textsuperscript{\rm 1}, Zheng Fang\textsuperscript{\rm 1,}\footnotemark[1]\ , Jiaxu Wang\textsuperscript{\rm 2}, Junjie Jiang\textsuperscript{\rm 1}, Delei Kong\textsuperscript{\rm 3},\\
\textbf{Chenming Hu\textsuperscript{\rm 1}, Yuetong Fang\textsuperscript{\rm 2}\ \ \& Renjing Xu\textsuperscript{\rm 2,}}\thanks{Corresponding authors.} \\
\textsuperscript{\rm 1}Northeastern University \quad
\textsuperscript{\rm 2}The Hong Kong University of Science and Technology (Guangzhou)\\
\textsuperscript{\rm 3}Hunan University
}
\begin{document}

\maketitle

\begin{abstract}
    The increasing complexity and parameter count of Convolutional Neural Networks (CNNs) and Transformers pose challenges in terms of computational efficiency and resource demands. Pruning has been identified as an effective strategy to address these challenges by removing redundant elements such as neurons, channels, or connections, thereby enhancing computational efficiency without heavily compromising performance. This paper builds on the foundational work of Optimal Brain Damage (OBD) by advancing the methodology of parameter importance estimation using the Hessian matrix. Unlike previous approaches that rely on approximations, we introduce Optimal Brain Apoptosis (OBA), a novel pruning method that calculates the Hessian-vector product value directly for each parameter. By decomposing the Hessian matrix across network layers and identifying conditions under which inter-layer Hessian submatrices are non-zero, we propose a highly efficient technique for computing the second-order Taylor expansion of parameters. This approach allows for a more precise pruning process, particularly in the context of CNNs and Transformers, as validated in our experiments including VGG19, ResNet32, ResNet50, and ViT-B/16 on CIFAR10, CIFAR100 and Imagenet datasets. Our code is available at \href{https://github.com/NEU-REAL/OBA}{\textcolor[HTML]{ED008A}{\texttt{https://github.com/NEU-REAL/OBA}}}.
\end{abstract}

\section{Introduction}

With the rapid development of deep learning, neural networks have become deeply integrated into all sectors of our daily life. Convolutional Neural Networks~\citep{lecun1998gradient, krizhevsky2012imagenet,He_2016_CVPR,ma2023llm} and Transformers~\citep{vaswani2017attention,dosovitskiy2021an} are two typical structures used most widely. As researchers continuously innovate, the performance of neural networks improves, but the number of parameters and the computational complexity also increase significantly. Therefore, how to efficiently reduce the parameter size and computational overhead of neural networks while maintaining their performance as much as possible has become a crucial problem.

Extensive research~\citep{obd,molchanov2016pruning} demonstrates that pruning is a powerful tool for dealing with this issue. Generally speaking, pruning can be divided into two main streams: unstructured pruning and structured pruning. Unstructured pruning~\citep{guo2016dynamic,han2015learning,dong2017learning} involves the removal of individual weights or neurons from a neural network.  The key advantage of unstructured pruning lies in its flexibility and the fine-grained control it offers over the model's architecture. This method often requires specialized hardware or software to exploit the resultant sparsity for computational efficiency. Structured pruning~\citep{anwar2017structured,yeom2021pruning} removes entire neurons, channels, or layers from a neural network, which is more frendly to software since parameters of neural networks are mainly structured data, such as tensor, matrix, and vector. Removing entire neurons directly corresponds to a slicing or selecting operation on the structured data, which is easy to implement and more compatible with standard hardware accelerators, such as GPUs and TPUs.

\citet{hanson1988comparing} is one of the earliest works to explore structured pruning. The underlying idea is that significant weights typically possess greater magnitudes, as they need to process and transmit more information to be influential in determining the network's accurate output. However, pruning under this guidance may sometimes incorrectly remove important neurons with small magnitude and reserve unimportant neurons with large magnitude. Optimal Brain Damage (OBD)~\citep{obd} and Optimal Brain Surgeon (OBS)~\citep{obs} propose to leverage the Hessian matrix of loss w.r.t. parameters to estimate the importance of each parameter. The Hessian matrix contains second-order partial derivatives of the loss function w.r.t. all pairs of parameters, which is very computationally expensive to compute. Thus, OBD approximates it as a diagonal matrix by assuming the loss of deleting several parameters is the sum of deleting these parameters individually. OBS views the pruning as an optimization problem and solves it with the Lagrange multiplier. They either discard or approximate the second-order partial derivatives between all pairs of parameters, which capture the change of loss on one parameter when deleting another parameter.

\paragraph{Our Contributions} In this paper, we follow the idea of OBD, leveraging Hessian matrix for parameter importance estimation. Instead of approximating Hessian matrix, we calculate the Hessian-vector product element $\sum_{j}\frac{\partial^2 \mathcal{L}}{\partial\theta_i\partial\theta_j}\delta\theta_i\delta\theta_j$ for each parameter in the network. To achieve this, we first separate the Hessian matrix of the whole network into Hessian submatrices between layers. Then, in the context of widely used network structures including convolutional neural networks (CNNs) and Transformers, we analyze the conditions where the Hessian submatrices between two layers are nonzero. Finally, we propose a highly efficient method to capture these conditions and obtain the Hessian-vector product element on each parameter. Stepping from approximating the Hessian matrix with the Fisher matrix to directly computing the Hessian-vector product, we propose Optimal Brain Apoptosis (OBA), a novel pruning method that efficiently calculates the second-order Taylor expansion for each parameter and is applicable to both structured and unstructured pruning tasks.

\vspace{-0.1cm}
\section{Related Work}
\paragraph{Model Compression}
Model compression is an area that focuses on creating smaller, faster, and more efficient models suitable for deployment in environments with limited resources, like mobile devices or embedded systems. There are several typical fields within this area, including quantization~\citep{NIPS2015_3e15cc11,rastegari2016xnor,pouransari2020least}, knowledge distillation~\citep{hinton2015distilling, chen2021distilling, zhou2021distilling}, neural architecture search~\citep{liu2018progressive,zoph2016neural,pham2018efficient}, and network pruning~\citep{molchanov2019importance,molchanov2016pruning}. Quantization, outlined in works like \citet{hubara2018quantized} and \citet{jacob2018quantization}, focuses on reducing parameter precision to accelerate inference and decrease model size, enabling deployment on devices with limited resources. Knowledge distillation, as introduced by \citet{hinton2015distilling,romero2014fitnets}, leverages a smaller "student" model to mimic a larger "teacher" model, effectively compressing the knowledge and achieving high performance with less computational demand. Neural Architecture Search (NAS), with seminal contributions from \citet{zoph2016neural}, automates the discovery of optimal architectures, often outperforming human-designed models in efficiency and accuracy. Pruning techniques, highlighted in work by \citet{han2015learning}, remove non-essential weights or neurons, significantly reducing model complexity and enhancing inference speed without major accuracy losses. Together, these techniques represent the forefront of model compression research, addressing the balance between performance and computational efficiency necessary for advanced AI applications.
\paragraph{Network Pruning}
Network pruning, initially recognized as an importance estimation problem~\citep{molchanov2019importance, chauvin1988back, yu2018nisp,he2020learning}, has been prompting researchers to focus on finding accurate criteria that reveals the importance of parameters or neurons in neural networks. \citet{molchanov2016pruning} operated under the assumption that each layer in feed-forward networks held equal importance and introduced a heuristic for global scaling normalization. However, this approach did not prove effective in networks incorporating skip connections. Additionally, the method relies on using network activations to calculate its criterion, resulting in increased memory demands. In contrast, pruning methods that focus on batch normalization~\citep{gordon2018morphnet,huang2018data,liu2017learning,ye2018rethinking} bypass the need for sensitivity analysis and are applicable on a global scale. In intricate network architectures~\citep{liu2021group,luo2020neural,you2019gate,zhang2021aligned}, parameter interdependencies often require their joint pruning. This collective pruning of interlinked parameters has been an area of focus in structured pruning research since its early stages. \citet{fang2023depgraph} proposes to build a dependency graph which captures the interdependencies between parameters and prunes parameters belonging to a graph together to achieve structured pruning for neural networks with complicated structures. In our pruning setting, we view the pruning as an importance estimation problem for each individual parameter (unstructured) or parameter group (structured) obtained from \citet{fang2023depgraph}. Parameters with low importance are pruned in each pruning step.

\paragraph{Hessian Matrix Estimation} The structure and computational aspects of the Hessian matrix in feedforward neural networks have been extensively studied since the early 1990s~\citep{buntine1994computing,wille1997structure}. The Hessian matrix was first utilized in neural network pruning by \citet{obd} to calculate the importance score of each neuron, leveraging diagonal approximation is used to estimate the Hessian matrix:
\begin{equation}
\delta \mathcal{L}_{\mathrm{OBD}}=\frac{1}{2}\left(\boldsymbol{\theta}_q^*\right)^2 \mathbf{H}_{q q}.
\end{equation}
Building upon this, OBS~\citep{obs} views the importance estimation as an optimization problem, and aims at finding a set of weights that yields least change on the loss:
\begin{equation}
    \min _q\left\{\min _{\delta \boldsymbol{\theta}} \frac{1}{2} \delta \boldsymbol{\theta}^{\top} \mathbf{H} \delta \boldsymbol{\theta} \text { s.t. } \mathbf{e}_q^{\top} \delta \boldsymbol{\theta}+\boldsymbol{\theta}_q^*=0\right\}.
    \end{equation}
Early research such as that by \citet{buntine1994computing} provided an extensive review of how to compute second derivatives in feed-forward networks, and \citet{wille1997structure} examined the Hessian matrix's structure and second derivative techniques. More contemporary efforts, such as EigenDamage~\citep{wang2019eigendamage}, utilize a Kronecker-factored eigenbasis for network reparameterization and approximate the Hessian matrix using the Fisher matrix, as discussed by \citet{martens2020new}. Recent studies~\citep{wu2020dissecting,singh2021analytic} have thoroughly investigated the common structures and rank properties of the Hessian in neural networks. \citet{pearlmutter1994fast} initially introduced an efficient method for computing the Hessian-vector product in feedforward neural networks. Our research applies this idea to the pruning of modern network architectures including CNNs and Transformers.

\vspace{-0.1cm}
\section{Preliminary}

Consider a feed-forward neural network with parameters $\theta$ and $L$ layers. Similar to OBD~\citep{obd}, when we add small perturbation $\delta\theta$ on $\theta$, the second-order Taylor expansion of the perturbation on the objective function is given by
\begin{equation}
    \label{eq:secondtaylor}
    \begin{split}
        \delta\mathcal{L}(\theta) = (\frac{\partial \mathcal{L}}{\partial\theta})^\mathsf{T}\delta\theta + \frac{1}{2}\delta\theta^\mathsf{T}\mathbf{H}\delta\theta + o(\|\delta\theta\|^3),
    \end{split}
\end{equation}

where $\mathbf{H}$ is the Hessian matrix that represents the second-order derivatives between all parameter pairs. It is usually infeasible to compute the Hessian matrix due to its complexity of $O(n^2)$, where n is the number of parameters in the network~\citep{obd, obs}. By expanding the first and second term of \cref{eq:secondtaylor}, we can define the perturbation of the loss caused by $\delta\theta_i$ as
\begin{equation}
    \delta\mathcal{L}(\theta_i) = \frac{\partial \mathcal{L}}{\partial\theta_i}\delta\theta_i + \sum_{j}\frac{\partial^2 \mathcal{L}}{\partial\theta_i\partial\theta_j}\delta\theta_i\delta\theta_j + o(\|\delta\theta_i\|^3).
    \label{eq:secondtaylor_expand}
\end{equation}
The first term of \cref{eq:secondtaylor_expand} is leveraged to estimate the improtance of neurons in \citet{molchanov2016pruning}. Same to prior works, we ignore the higher order term $o(\|\delta\theta_i\|^3)$. Current works~\citep{obs, yu22cbs, benbaki2023fast} that leverage the second-order Taylor expansion term approximate the Hessian matrix with Fisher information matrix. This approximation, if applied to \cref{eq:secondtaylor_expand}, would change its second-order term to $\sum_{j}\frac{\partial \mathcal{L}}{\partial\theta_i}\frac{\partial \mathcal{L}}{\partial\theta_j}\delta\theta_i\delta\theta_j$. However, this approximation is not accurate enough to capture the second-order loss perturbation caused by $\delta\theta_i$ and $\delta\theta_j$. Therefore, we focus on a theoretical analysis on how to calculate the original second-order term $\sum_{j}\frac{\partial^2 \mathcal{L}}{\partial\theta_i\partial\theta_j}\delta\theta_i\delta\theta_j$. 

\section{Method}
\vspace{-0.2cm}
\subsection{Definition}
\vspace{-0.2cm}
We derive from a general form of linear layers in modern neural networks. For layer $l\in[1,L]$, we denote the weight parameter as $W^{(l)}\in\mathbb R^{l_{\text{out}}\times l_{\text{in}} \times p_{\text{weight}}}$ and bias parameter as $b^{(l)}\in\mathbb R^{l_{\text{out}}}$. We denote the input of layer $l$ as $X^{(l)}\in\mathbb R^{l_{\text{in}}\times p_{\text{in}}}$ and output of layer $l$ as $Y^{(l)}\in\mathbb R^{l_{\text{out}}\times p_{\text{out}}}$. $p_{\text{weight}}$, $p_{\text{out}}$, and $p_{\text{in}}$ are the length of flattened weights, output values, and input values that contribute to the connections between every pairs of input neurons and output neurons, for example $p_{\text{weight}}=1, p_{\text{out}}=1, p_{\text{in}}=1$ in the context of fully connected layers and $p_{\text{weight}}=s^{(l)\ 2}_{kernel}, p_{\text{out}}=h^{(l)}_{\text{out}}w^{(l)}_{\text{out}}, p_{\text{in}}=h^{(l)}_{\text{in}}w^{(l)}_{\text{in}}$ in the context of convolutional layers. The forward propagation of layer $l$ is given by
\begin{equation}
    \label{eq:forward}
    Y^{(l)}_{ab} = \sum_{cde} W^{(l)}_{acd} X^{(l)}_{ce} M^{(l)}_{bde} + b^{(l)}_{a},
\end{equation}
where $M\in\{0, 1\}^{p_{\text{out}}\times p_{\text{weight}}\times p_{\text{in}}}$, determined by the layer itself, represents the connections among all input values, weight values, and output values. We denote the flattened vector of $X^{(l)}$, $W^{(l)}$, $Y^{(l)}$, and $M^{(l)}$ as $x^{(l)}\in\mathbb R^{l_{\text{in}}\cdot p_{\text{in}}}$, $w^{(l)}\in\mathbb R^{l_{\text{out}}\cdot l_{\text{in}} \cdot p_{\text{weight}}}$, $y^{(l)}\in\mathbb R^{l_{\text{out}}\cdot p_{\text{out}}}$, and $m^{(l)}\in\{0, 1\}^{p_{\text{out}}\cdot p_{\text{weight}}\cdot p_{\text{in}}}$, respectively. the parameters $\theta^{(l)}$ of layer $l$ can be expressed as $\theta^{(l)} = \begin{bmatrix} w^{(l)} \\ b^{(l)} \end{bmatrix}$. We represent all partial derivative terms with variables in their vector forms to ensure these terms are with the same definition of Jacobian matrix.

In a certain layer $l$ defined by \cref{eq:forward}, the gradient of the loss w.r.t. the indexed parameters $W^{(l)}_{acd}$ and $b^{(l)}_{a}$ are respectively $\sum_{ab}\frac{\partial\mathcal{L}}{\partial Y^{(l)}_{ab}}\sum_e X^{(l)}_{ce} M^{(l)}_{bde}$ and $\sum_{b}\frac{\partial\mathcal{L}}{\partial Y^{(l)}_{ab}}$. It is clear that the condition for the Hessian matrix entries between layer $l$ and any distinct layer $l'$ being exclusively zero hinges on the differentiability of terms $\frac{\partial\mathcal{L}}{\partial y^{(l)}}$ and $X^{(l)}$ with respect to the parameters of layer $l'$, which depends on the connection type of two layer types. We observe that the connectivity types which introduce nonzero Hessian submatrices between any two layers in a neural network can be divided into two cases: series connectivity and parallel connectivity, as shown in \cref{fig:Hessian_demo}. We next analyze these two cases and calculate the Hessian-vector product element $\sum_{j}\frac{\partial^2 \mathcal{L}}{\partial\theta_i\partial\theta_j}\delta\theta_i\delta\theta_j$ in \cref{eq:secondtaylor_expand} for the two cases respectively.
\vspace{-2pt}
\subsection{Series Connectivity}
\vspace{-6pt}

\begin{definition}[Series Connectivity]
    In a neural network at layer $l$, if there exists a layer $l'$ such that there is a differentiable function mapping the output of layer $l'$ to the input of layer $l$, we say layer $l'$ and $l$ are in series connectivity. Specifically:
    \begin{itemize}
        \item layer $l'$ is in lower series connectivity to layer $l$.
        \item layer $l$ is in upper series connectivity to layer $l'$.
        \end{itemize}
\end{definition}
In \cref{fig:Hessian_demo}, there are differentiable functions from $Y^{(l_1)}$ to $X^{(l_2)}$ and $X^{(l_3)}$, respecively, so layer $l_1$ is in series connectivity with both layer $l_2$ and layer $l_3$. Without loss of generality, we take layer $l_1$ and layer $l_2$ as an example. Then
\begin{equation}
    \frac{\partial \mathcal{L}}{\partial y^{(l_1)}}=\frac{\partial \mathcal{L}}{\partial y^{(l_2)}}\mkern-30mu\underbrace{\frac{\partial y^{(l_2)}}{\partial x^{(l_2)}}}_{\text{differentiable to } \theta^{(l_2)}}\mkern-30mu\frac{\partial x^{(l_2)}}{\partial y^{(l_1)}},\nonumber
\end{equation}
also $X^{(l_2)}$ is differentiable to $\theta^{(l_1)}$. According to our analysis in the beginning of this section, a nonzero Hessian submatrix exists between layers $l_1$ and $l_2$.

\begin{theorem}
    \label{thm:series_connect}
    For layer $l$ in a neural network where layers $l_{\text{up}}\in\mathbf{L}_{\text{up}}$ and layers $l_{\text{low}}\in\mathbf{L}_{\text{low}}$ are in upper and lower series connectivity to layer $l$, respectively, then for weight parameter $w^{(l)}$ and bias parameter $b^{(l)}$ of layer $l$, we have
    \begin{align}
        \label{eq:series_w}
        \sum_{l'\in\mathbf{L}_{\text{up}}\cup\mathbf{L}_{\text{low}}}\sum_{j}\frac{\partial^2 \mathcal{L}}{\partial\theta^{(l')}_j\partial w^{(l)}}\delta\theta^{(l')}_j\odot\delta w^{(l)} =
        &\sum_{l_{\text{up}}\in\mathbf{L}_{\text{up}}}\frac{\partial \mathcal{L}}{\partial y^{(l_{\text{up}})}} \mathbf{J}^{(l_{\text{up}})}_{\delta W^{(l_{\text{up}})}}  \frac{\partial x^{(l_{\text{up}})}}{\partial w^{(l)}}\odot\delta w^{(l)}\\
        &+\frac{\partial \mathcal{L}}{\partial y^{(l)}} \frac{\partial y^{(l)}}{\partial w^{(l)}}\bigg|_{\hat{X}^{(l)}}\odot\delta w^{(l)}
    \end{align}

    in which $\hat{X}^{(l)}$ is given by
    \begin{equation}
        \hat{X}^{(l)}_{mn}=\sum_{l_{\text{low}}\in\mathbf{L}_{\text{low}}}\sum_{k}\frac{\partial X^{(l)}_{mn}}{\partial \theta^{(l_{\text{low}})}_k}\delta\theta^{(l_{\text{low}})}_k,
    \end{equation}
    and
    \begin{equation}
        \label{eq:series_b}
        \sum_{l'\in\mathbf{L}_{\text{up}}\cup\mathbf{L}_{\text{low}}}\sum_{j}\frac{\partial^2 \mathcal{L}}{\partial\theta^{(l')}_j\partial b^{(l)}}\delta\theta^{(l')}_j\odot \delta b^{(l)} =
        \sum_{l_{\text{up}}\in\mathbf{L}_{\text{up}}}\frac{\partial \mathcal{L}}{\partial y^{(l_{\text{up}})}} \mathbf{J}^{(l_{\text{up}})}_{\delta W^{(l_{\text{up}})}}  \frac{\partial x^{(l_{\text{up}})}}{\partial b^{(l)}}\odot\delta b^{(l)},
    \end{equation}
    where $\mathbf{J}^{(l)}_{\delta W^{(l)}}\in\mathbb{R}^{(l_{\text{out}}\cdot m_{\text{out}}) \times (l_{\text{in}} \cdot m_{\text{in}})}$ is the jacobian matrix of $y^{(l)}$ with respect to $x^{(l)}$ taking $\delta W^{(l)}$ as the weights, and $\frac{\partial y^{(l)}}{\partial w^{(l)}}\big|_{\hat{X}^{(l)}}$ is the jacobian matrix of $y^{(l)}$ w.r.t. $w^{(l)}$ taking $\hat{X}^{(l)}$ as input.
\end{theorem}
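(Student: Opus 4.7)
The plan is to use Pearlmutter's characterisation of the Hessian-vector product as the directional derivative of the gradient, then decompose via the product rule. From \cref{eq:forward} the layer-$l$ gradient factors as $\partial\mathcal{L}/\partial w^{(l)}=(\partial\mathcal{L}/\partial y^{(l)})\,(\partial y^{(l)}/\partial w^{(l)})$, where the first factor depends only on parameters of layers in $\mathbf{L}_{\text{up}}$ (through the downstream forward pass and loss) and the second depends on $X^{(l)}$ alone, hence only on parameters in $\mathbf{L}_{\text{low}}$. Bilinearity of \cref{eq:forward} in $(W^{(l)},X^{(l)})$ ensures that neither factor depends on $\theta^{(l)}$ itself, so the LHS sum over $l'\in\mathbf{L}_{\text{up}}\cup\mathbf{L}_{\text{low}}$ already equals the full directional derivative $D_{\delta\theta}\bigl[\partial\mathcal{L}/\partial w^{(l)}\bigr]$; no in-layer block is omitted. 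Applying the product rule then splits this directional derivative into an upper and a lower contribution, which I will match against the two summands of \cref{eq:series_w}.

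For the lower contribution, $\partial y^{(l)}/\partial w^{(l)}$ is linear in $X^{(l)}$ and in no other variable, so its directional derivative in any joint direction $\delta\theta$ is obtained simply by substituting $X^{(l)}\mapsto\hat X^{(l)}$, where $\hat X^{(l)}$ is the first-order perturbation of the input defined in the statement. Left-multiplying by $\partial\mathcal{L}/\partial y^{(l)}$ and taking the Hadamard product with $\delta w^{(l)}$ produces the second summand exactly.

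For the upper contribution, I expand the backpropagated gradient at each upper-series layer by one chain-rule step, $\frac{\partial\mathcal{L}}{\partial y^{(l)}}=\sum_{l_{\text{up}}}\frac{\partial\mathcal{L}}{\partial y^{(l_{\text{up}})}}\,\frac{\partial y^{(l_{\text{up}})}}{\partial x^{(l_{\text{up}})}}\,\frac{\partial x^{(l_{\text{up}})}}{\partial y^{(l)}}$. Of the three factors, only $\partial y^{(l_{\text{up}})}/\partial x^{(l_{\text{up}})}$ depends on $W^{(l_{\text{up}})}$, and it does so linearly by bilinearity of \cref{eq:forward}; its directional derivative along $\delta W^{(l_{\text{up}})}$ is therefore exactly $\mathbf{J}^{(l_{\text{up}})}_{\delta W^{(l_{\text{up}})}}$. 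Multiplying by $\partial y^{(l)}/\partial w^{(l)}$ on the right and collapsing $(\partial x^{(l_{\text{up}})}/\partial y^{(l)})\,(\partial y^{(l)}/\partial w^{(l)})=\partial x^{(l_{\text{up}})}/\partial w^{(l)}$ via the chain rule recovers the first summand of \cref{eq:series_w}. The bias identity \cref{eq:series_b} follows at once, because $\partial y^{(l)}/\partial b^{(l)}$ is a constant independent of $X^{(l)}$, so the lower contribution vanishes and only the upper contribution, with $b^{(l)}$ in place of $w^{(l)}$, survives.

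The main obstacle I anticipate is the bookkeeping in the upper step when the chain from layer $l$ to layer $l_{\text{up}}$ passes through several intermediate differentiable operations (activations, normalisations, skip connections). I need to argue that the directional derivative of the backpropagated gradient splits cleanly into a sum indexed by $l_{\text{up}}\in\mathbf{L}_{\text{up}}$ with no cross terms, which hinges on the fact that $\partial y^{(l_{\text{up}})}/\partial x^{(l_{\text{up}})}$ is the only Jacobian along any such path that depends on $W^{(l_{\text{up}})}$, so every other factor is frozen to zeroth order under the perturbation $\delta W^{(l_{\text{up}})}$.
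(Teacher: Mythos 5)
Your argument is correct and is essentially the paper's own proof in different clothing: the two summands you obtain from the product rule applied to $\frac{\partial\mathcal{L}}{\partial w^{(l)}}=\frac{\partial\mathcal{L}}{\partial y^{(l)}}\frac{\partial y^{(l)}}{\partial w^{(l)}}$ are exactly the paper's lemma for upper series connectivity (bilinearity of \cref{eq:forward} turning the perturbed backward Jacobian into $\mathbf{J}^{(l_{\text{up}})}_{\delta W^{(l_{\text{up}})}}$) and its lemma for lower series connectivity (linearity in $X^{(l)}$ turning the directional derivative into evaluation at $\hat{X}^{(l)}$), with the bias case handled identically. The paper merely carries this out at the index level with the connectivity tensor $M^{(l)}$ rather than via Pearlmutter's directional-derivative formulation, and it shares your implicit treatment of $\frac{\partial\mathcal{L}}{\partial y^{(l_{\text{up}})}}$ as insensitive to perturbations of lower-layer parameters.
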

The proof is provided in \cref{sec:proof_series}. Note that the sets $\mathbf{L}_{\text{up}}(l)$ and $\mathbf{L}_{\text{low}}(l)$ are dependent on layer $l$, and are abbreviated for simplicity. With \cref{thm:series_connect}, 
the second-order term of the Taylor expansion of the loss with respect to each individual parameter can be computed, taking into account parameters belonging to layers that are in series connectivity to a specific layer. For classical neural network structures such as convolutional neural networks and fully connected neural networks, there only exist series connectivities between layers. Thus, we can directly apply \cref{thm:series_connect} to calculate \cref{eq:secondtaylor_expand} for each individual parameter.

\begin{figure}[t!]
    \centering
    \begin{subfigure}{.44\textwidth}
        \centering
        \includegraphics[width=\linewidth]{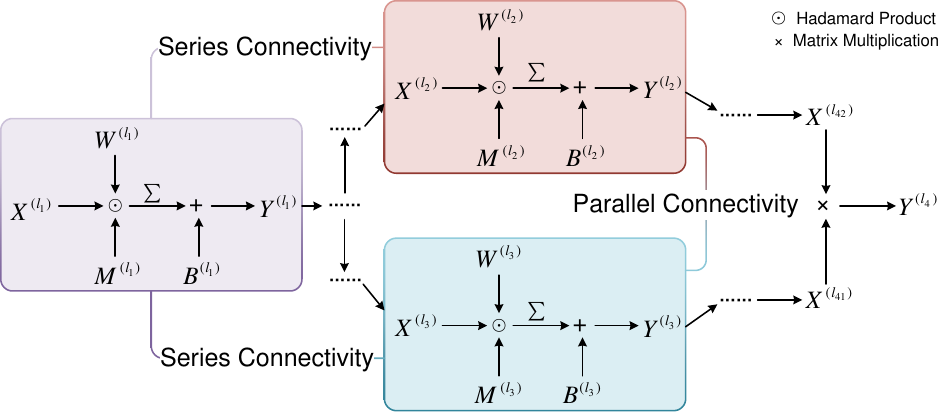}
        \vspace{6pt}
        \caption{}
        \label{fig:Hessian_demo}
    \end{subfigure}%
    \begin{subfigure}{.55\textwidth}
        \centering
        \includegraphics[width=\linewidth]{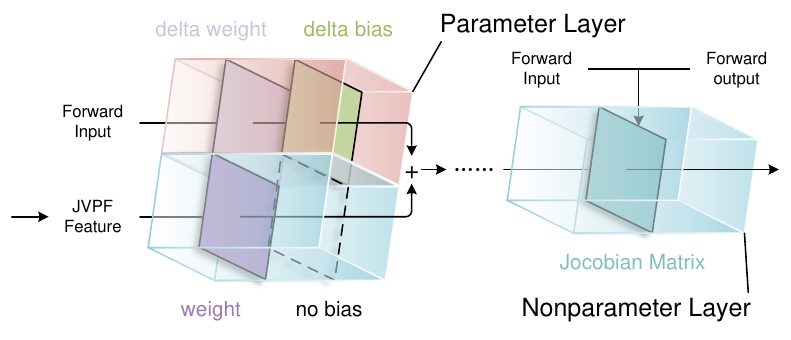}
        \caption{}
        \label{fig:jvp_forward}
    \end{subfigure}
    \vspace{-4pt}
    \caption{\textbf{(a)} An illustration of conditions where the Hessian matrix between parameters of two layers are nonzero. \textbf{(b)} An illustration of Jacobian-Vector Product Forward Propagation. Two forward propagation processes are needed for parameter layers and one forward propagation process is needed for nonparameter layers. For nonparameter layers we leverage Jacobian-vector product to conduct the forward process and do not need to calculate the Jacobian matrix explicitly.}
    \vspace{-10pt}
    \label{fig:neural_concepts}
\end{figure}
\vspace{-6pt}
\subsection{Parallel Connectivity}
\vspace{-6pt}
For recent novel neural network structures such as Transformer~\citep{vaswani2017attention}, matrix multiplication plays a crucial and effective role in achieving their impressive performance. It also introduces parallel connectivity to layers and lead to the nonzero Hessian matrices between these connected layers.
\begin{definition}
    \label{def:parallel}
    In a neural network, if there exist two layers $l$ and $l'$ such that there are differentiable functions respectively mapping the outputs of layer $l$ and $l'$ to the inputs $X^{\text{(left)}}$ and $X^{\text{(right)}}$ of a matrix multiplication operation $Y^{\text{(mul)}}=X^{\text{(left)}}X^{\text{(right)}}$,i.e., layers $l$ and $l'$ are both in lower series connectivity to the matrix multiplication operation, we say layer $l$ and $l'$ are in parallel connectivity. Denote the multiplication operation as layer $l_m$, then
    \begin{itemize}
        \item layer $l$ is in left parallel connectivity to layer $l'$.
        \item layer $l'$ is in right parallel connectivity to layer $l$.
    \end{itemize}
\end{definition}
In \cref{fig:Hessian_demo}, layer $l_2$ and layer $l_3$ are in parallel connectivity. Similarly, the gradient of loss w.r.t. parameters of layer $l_2$ is $\frac{\partial \mathcal{L}}{\partial \theta^{(l_2)}}=\frac{\partial \mathcal{L}}{\partial y^{(l_4)}}\frac{\partial y^{(l_4)}}{\partial y^{(l_{42})}}\frac{\partial y^{(l_{42})}}{\partial \theta^{(l_2)}}$ in which $\frac{\partial y^{(l_4)}}{\partial y^{(l_{42})}}$ is differentiable to $\theta^{(l_3)}$ and vice versa. Therefore, the nonzero Hessian submatrices resulting from parallel connectivity should also be considered.
\begin{theorem}
    \label{thm:parallel}
    For a multiplication operation $Y^{\text{(mul)}}=X^{\text{(left)}}X^{\text{(right)}}$ in a neural network, where $X^{\text{(left)}}\in \mathbb{R}^{l_{row} \times l_{hid}}$ and $X^{\text{(right)}} \in \mathbb{R}^{l_{hid} \times l_{col}}$, layers $l_{l}\in\mathbf{L}_{\text{left}}$ and $l_{r}\in\mathbf{L}_{\text{right}}$ are in lower series connectivity to this multiplication operation, respectively. Consider two surrogate weight matrices $\hat{X}^{\text{(left)}}\in\mathbb{R}^{l_{row} \times l_{hid}}$ and $\hat{X}^{\text{(right)}}\in\mathbb{R}^{l_{hid} \times l_{col}}$ given by:
    \begin{align}
        \hat{X}_{kn}^{\text{(left)}} &= \sum_{l_{l}\in\mathbf{L}_{\text{left}}} \sum_{q}\frac{\partial X^{\text{(left)}}_{kn}}{\partial \theta^{(l_{l})}_{q}} \delta \theta^{(l_{l})}_{q},\\
        \hat{X}_{no}^{\text{(right)}} &= \sum_{l_{r}\in\mathbf{L}_{\text{right}}} \sum_{r}\frac{\partial X^{\text{(right)}}_{no}}{\partial \theta^{(l_{r})}_{r}} \delta \theta^{(l_{r})}_{r}.
    \end{align}
    Then we have
    \begin{equation}
        \sum_{l_l\in\mathbf{L}_{\text{left}}}\sum_{q}\frac{\partial^2 \mathcal{L}}{\partial \theta^{(l_{l})}_{q}\partial \theta^{(l_{r})}} \delta \theta^{(l_{l})}_{q}\odot \delta \theta^{(l_{r})}=
        \frac{\partial \mathcal{L}}{\partial y^{\text{(mul)}}}\frac{\partial y^{\text{(mul)}}}{\partial x^{\text{(right)}}}\bigg|_{\hat{X}^{\text{(left)}}} \frac{\partial x^{\text{(right)}}}{\partial \theta^{(l_{r})}}\odot\delta \theta^{(l_{r})}
        \label{eq:parallelthm_suml}
    \end{equation}
    and
    \begin{equation}
        \sum_{l_r\in\mathbf{L}_{\text{right}}}\sum_{r}\frac{\partial^2 \mathcal{L}}{\partial \theta^{(l_{r})}_{r} \partial \theta^{(l_{l})}} \delta \theta^{(l_{r})}_{r}\odot \delta \theta^{(l_{l})}=
        \frac{\partial \mathcal{L}}{\partial y^{\text{(mul)}}}\frac{\partial y^{\text{(mul)}}}{\partial x^{\text{(left)}}}\bigg|_{\hat{X}^{\text{(right)}}} \frac{\partial x^{\text{(left)}}}{\partial \theta^{(l_{l})}}\odot\delta \theta^{(l_{l})}.
        \label{eq:parallelthm_sumr}
    \end{equation}
\end{theorem}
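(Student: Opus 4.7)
The plan is to reduce the parallel-connectivity case to the chain-rule computation already used for the ordinary gradient, and then exploit the bilinearity of the multiplication node. By symmetry, I would only prove \eqref{eq:parallelthm_suml}, since \eqref{eq:parallelthm_sumr} follows by swapping the roles of the left and right operands.

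First, I would expand the single derivative of the loss with respect to a component $\theta^{(l_r)}_s$ of $\theta^{(l_r)}$ using the chain rule through the multiplication node. Since $Y^{\text{(mul)}}_{ij} = \sum_k X^{\text{(left)}}_{ik} X^{\text{(right)}}_{kj}$, one has $\partial Y^{\text{(mul)}}_{ij}/\partial X^{\text{(right)}}_{kn} = X^{\text{(left)}}_{ik}\,\delta_{jn}$, so
\begin{equation}
\frac{\partial \mathcal L}{\partial \theta^{(l_r)}_s} \;=\; \sum_{ijk}\frac{\partial \mathcal L}{\partial Y^{\text{(mul)}}_{ij}}\, X^{\text{(left)}}_{ik}\,\frac{\partial X^{\text{(right)}}_{kj}}{\partial \theta^{(l_r)}_s}.
\end{equation}
Differentiating once more with respect to $\theta^{(l_l)}_q$, where $l_l\in\mathbf{L}_{\text{left}}$, I would isolate the contribution that arises specifically from the parallel coupling: the factor $\partial X^{\text{(right)}}_{kj}/\partial \theta^{(l_r)}_s$ is independent of $\theta^{(l_l)}_q$ (the right and left branches are disjoint by hypothesis), and the dependence of $\partial \mathcal L/\partial Y^{\text{(mul)}}_{ij}$ on $\theta^{(l_l)}_q$ through downstream series-connected layers is already accounted for by \cref{thm:series_connect}. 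Thus the parallel-connectivity contribution is
\begin{equation}
\frac{\partial^2 \mathcal L}{\partial \theta^{(l_l)}_q\,\partial \theta^{(l_r)}_s} \;=\; \sum_{ijk}\frac{\partial \mathcal L}{\partial Y^{\text{(mul)}}_{ij}}\,\frac{\partial X^{\text{(left)}}_{ik}}{\partial \theta^{(l_l)}_q}\,\frac{\partial X^{\text{(right)}}_{kj}}{\partial \theta^{(l_r)}_s}.
\end{equation}

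Next, I would multiply by $\delta\theta^{(l_l)}_q$ and sum over $l_l\in\mathbf{L}_{\text{left}}$ and $q$. By linearity, the inner sum collapses into the surrogate matrix $\hat X^{\text{(left)}}$:
\begin{equation}
\sum_{l_l\in\mathbf{L}_{\text{left}}}\sum_q \frac{\partial X^{\text{(left)}}_{ik}}{\partial \theta^{(l_l)}_q}\,\delta\theta^{(l_l)}_q \;=\; \hat X^{\text{(left)}}_{ik}.
\end{equation}
Substituting back, the accumulated expression is exactly the same chain-rule product as in the first display, with $X^{\text{(left)}}$ replaced by $\hat X^{\text{(left)}}$. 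By the notational convention $(\,\cdot\,)\big|_{\hat X^{\text{(left)}}}$, this is precisely $\tfrac{\partial \mathcal L}{\partial y^{\text{(mul)}}}\tfrac{\partial y^{\text{(mul)}}}{\partial x^{\text{(right)}}}\big|_{\hat X^{\text{(left)}}}\tfrac{\partial x^{\text{(right)}}}{\partial \theta^{(l_r)}_s}$; taking the Hadamard product with $\delta\theta^{(l_r)}_s$ yields \eqref{eq:parallelthm_suml} componentwise in $s$.

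The hardest part, I expect, is not the algebra — which is a two-line consequence of bilinearity — but the conceptual step of justifying cleanly which cross-derivative terms belong to the parallel piece and which are shunted off to the series theorem. The bilinearity of matrix multiplication is what makes the identity clean: because $Y^{\text{(mul)}}$ is linear in each operand, the second partial with respect to one element of $X^{\text{(left)}}$ and one of $X^{\text{(right)}}$ is a constant (a Kronecker delta), and higher mixed derivatives vanish. This is the structural property that allows the whole sum to be re-expressed as a single gradient evaluated at the surrogate $\hat X^{\text{(left)}}$, and it is what makes the formula computable by a second forward pass (cf.\ \cref{fig:jvp_forward}) rather than by ever forming the full Hessian.
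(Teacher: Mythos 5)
Your proposal is correct and follows essentially the same route as the paper: both reduce the cross-layer second derivative to the chain-rule expression $\sum_{ko}\frac{\partial \mathcal{L}}{\partial Y^{\text{(mul)}}_{ko}}\sum_{n}\frac{\partial X^{\text{(right)}}_{no}}{\partial \theta^{(l_r)}_r}\frac{\partial X^{\text{(left)}}_{kn}}{\partial \theta^{(l_l)}_q}$ by exploiting that the mixed second derivative of the bilinear node is a Kronecker delta, and then collapse the weighted sum over $l_l\in\mathbf{L}_{\text{left}}$ and $q$ into the surrogate $\hat{X}^{\text{(left)}}$. The only cosmetic difference is that you obtain this formula by differentiating the first-order gradient and explicitly arguing which product-rule terms are shunted to the series theorem, whereas the paper writes the parallel contribution down directly in its Lemma and handles the layer sum in a separate step.
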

Proof can be seen in \cref{sec:proof_parallel}. Now we can analytically calculate the Hessian-vector product element of each parameter specifically focusing on the interaction between its layer and any other layers that are in parallel connectivity to it. Combining \cref{thm:series_connect} and \cref{thm:parallel}, we can calculate the Hessian-vector product element of each parameter in a neural network. Next, we focus on an efficient calculation the Hessian-vector product element of each parameter.
\vspace{-6pt}
\subsection{Jacobian-Vector Product Forward Propagation}
\vspace{-6pt}
From a computational overhead perspective, the main calculation part within \cref{thm:series_connect} and \cref{thm:parallel} focuses on $\hat{X}^{(l)}$, the gradients of $X^{(l)}$ w.r.t. the parameters of all layers that are in lower series connectivity to $\hat{X}^{(l)}$, as we need to separately back-propagate each entry of $X^{(l)}$, usually a large matrix in attention modules. To address this issue, we introduce Jacobian-Vector Product Forward Propagation (JVPF), a method capable of computing $\hat{X}^{(l)}$ for all layers with an acceptable computational expense.

Let's look at a layer in lower series connectivity to layer $l$ and denote it as layer $l_{\text{low}}$. The derivative of $X^{(l)}$ w.r.t. the parameter $\theta^{(l_{\text{low}})}$ a single layer $l_{\text{low}}$ multiplied with $\delta\theta^{(l_{\text{low}})}$ can be expressed as
\begin{equation}
    \frac{\partial x^{(l)}}{\partial \theta^{(l_{\text{low}})}}\delta \theta^{(l_{\text{low}})}
    =\frac{\partial x^{(l)}}{\partial x^{(l-1)}}\cdots\frac{\partial x^{(l_{\text{low}}+1)}}{\partial x^{(l_{\text{low}})}}\frac{\partial x^{(l_{\text{low}})}}{\partial \theta^{(l_{\text{low}})}}\delta \theta^{(l_{\text{low}})},\nonumber
\end{equation}
indicating that we can calculate it in a layer-by-layer manner. Further, let us group layers in lower series connectivity to layer $l$ into several groups, each of which contains both parameter layers and nonparameter layers that are in series connectivity to each other. We denote the $i^{th}$ group as $\mathbf{G}_i=\{l_{i1}, l_{i2}, \cdots, l_{iN_i}\}$, where $N_i$ is the number of layers in group $\mathbf{G}_i$ and layer $l_{i(j+1)}$ is subsequent to layer $l_{ij}$. Then we have
\begin{equation}
    \label{eq:hatx}
    \hat{x}^{(l)}=\sum_{i}\sum_{l_{\text{low}}\in\mathbf{G}_i}\frac{\partial x^{(l)}}{\partial \theta^{(l_{\text{low}})}}\delta \theta^{(l_{\text{low}})}
    =\sum_i\frac{\partial x^{(l)}}{y^{(l_{N_i})}}f^{(l_{iN_i})}\circ f^{(l_{i(N_i-1)})}\circ\cdots\circ f^{(l_{i1})}(0)\nonumber
\end{equation}
where
\begin{equation}
    \label{eq:surrogate_f}
    f^{(l_{ij})}(x)=
    \begin{cases}
        \frac{\partial y^{(l_{ij})}}{\partial  \theta^{(l_{ij})}}\delta\theta^{(l_{ij})}+\frac{\partial y^{(l_{ij})}}{\partial x^{(l_{ij})}}x  &l_{ij}\text{ has parameters},\\
        \frac{\partial y^{(l_{ij})}}{\partial x^{(l_{ij})}}x  &\text{else}.
    \end{cases}
\end{equation}
$x^{(l_{ij})}$, $y^{(l_{ij})}$, and $\theta^{(l_{ij})}$ are the input, output, and parameters of layer $l_{ij}$. Once we replace the forward function of each layer with \cref{eq:surrogate_f}, we could calculate $\hat{x}^{(l)}$ for each layer $l$ through one forward propagation process, and all series connectivity groups can be calculated in parallel. An intuitive demonstration of JVPF is shown in \cref{fig:jvp_forward}.
\vspace{-0.2cm}
\subsection{Pruning Strategy}
\vspace{-0.2cm}
\begin{wrapfigure}{r}{0.5\textwidth} 
    \centering
    \vspace{-15pt}
    \includegraphics[width=0.5\textwidth]{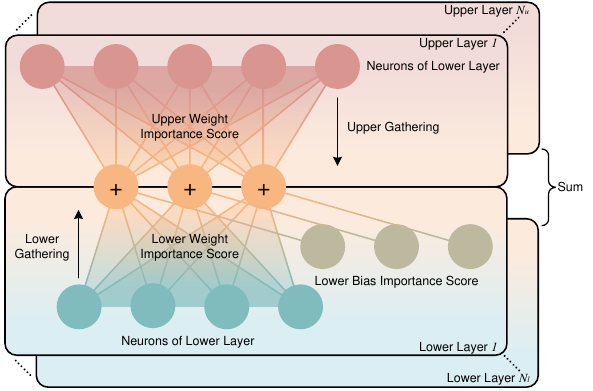} 
    \caption{Importance score of each neuron in a group is gathered from parameters of lower layers and upper layers.}
    \label{fig:importance_group}
    \vspace{-20pt}
\end{wrapfigure}

Utilizing \cref{thm:series_connect} and \cref{thm:parallel} along as our proposed JVPF, which offers an efficient choice to obtain the essential intermediate values, we can calculate the Hessian-vector product element of each parameter as their importance scores with several batches of traning data. Please refer to appendix \ref{sec:importance_acquisition} for the detailed importance score acquisition. This importance score can be leveraged to conduct unstructured pruning for each parameter individually, or conduct structured pruning for each parameter group.

\vspace{-5pt}
\paragraph{Structured Pruning}
Following \citet{fang2023depgraph}, parameters from in-layer and inter-layer connections can be organized into several groups $G$. The importance scores belonging to each group is defined as $\mathcal{I}_G=\{\mathcal{I}_G^{(i)}|g_i\in G\}$. For each group $g_i\in G$, importance scores are summed on every neuron over parameters of upper layers $L_{u}\subset g_i$ with length $N_u$ and lower layers $L_{l}\subset g_i$ with length $N_l$, as illustrated in in \cref{fig:importance_group}.

\vspace{-5pt}
\paragraph{Unstructured Pruning}
The importance score in unstructured learning is more straightforward. By eliminating the process of gathering importance, considering each parameter layer as a group, and each parameter as a neuron, the definition of importance score becomes similar to what is described in structured pruning scenarios. Experimentally we found that the gradients of some parameters are zero due to gradient vanishing, making us hard to judge the importance of these parameters. This would have little influence on structured pruning since neurons' importance scores are gathered through multiple weights. However, in unstructured pruning, the importance score of each weight only depends on itself. By adding the magnitude of the corresponding weight in the importance score term, the problem is resolved.

In each pruning step, we calculate every importance score $\mathcal{I}_G^{(i,j)}$ of $j^{th}$ neuron in $i^{th}$ group over traning data of $B$ batches, and normalize them within each group. Then we rank them from lowerst to highest. The lowest $p$ percentage of parameters are pruned. We can gradually increase $p$ to iteratively prune the model to a specific FLOPs or parameter percentage.
\vspace{-0.1cm}
\section{Results}
\vspace{-6pt}
We empirically study the performance of OBA on CNNs where only series connectivity exists, and attention networks where both series connectivity and parallel connectivity are present. We focus on pruning towards as small FLOPs as possible to reduce the computation overhead of model inference. For structured pruning, we primarily compare our methods with those that leverage Hessian matrix information~\citep{obs, wang2019eigendamage}. In the context of unstructured pruning, we evaluate our approach against the state-of-the-art unstructured pruning method, CHITA~\citep{benbaki2023fast}. Classical importance acquisition methods Weight (the magnitude of weights), OBD~\citep{obd} and Taylor~\citep{molchanov2016pruning} are also added into comparison for both structured and unstructured pruning tasks. In our experiments, we choose $\delta\theta_i=\theta_i$. The implementation details can be found at \cref{sec:implementation}.

\vspace{-5pt}
\subsection{Structured Pruning}
\vspace{-5pt}
Current structured pruning workflows can be roughly divided into one-shot pruning and iterative pruning. The former prunes the fine-tuned model towards a sparsified network and fine-tunes it after pruning, whereas the latter prunes the model iteratively and fine-tunes the model after each pruning step. One-shot pruning is more efficient than iterative pruning, but the latter is more effective. We evaluate our method on both of these two workflows.

\begin{table}[h]
    \centering
    \vspace{-5pt}
    \begin{minipage}[t]{0.545\linewidth}
        \caption{The Spearman correlation of the importance scores of each method with the importance rank of neurons on ResNet32.}
        \resizebox{\textwidth}{!}{ 
            \begin{tabular}{r|cccccc}
                \toprule\midrule
                \multicolumn{1}{c}{Method} & OBA            & Weight         & Taylor         & OBS   & OBD            & EigenDamage  \\
                \midrule
                \multicolumn{1}{l|}{\textit{all layers}}          & \multicolumn{6}{c}{\textbf{CIFAR100}}                                          \\
                Max                                      & \textbf{0.500} & 0.115          & 0.457          & 0.248 & 0.222          & -0.077       \\
                Standardization                          & \textbf{0.496} & 0.335          & 0.450          & 0.292 & 0.474          & -0.047       \\
                $l_2$-Norm                                  & 0.492          & 0.433          & 0.441          & 0.407 & \textbf{0.541} & -0.012       \\
                None                                     & \textbf{0.444} & -0.247         & -0.157         & 0.222 & 0.275          & 0.046        \\
                \multicolumn{1}{l|}{\textit{per layer}}  & \textbf{0.443} & 0.271          & 0.292          & 0.304 & 0.287          & 0.018        \\
                \midrule
                \multicolumn{1}{l|}{\textit{all layers}} & \multicolumn{6}{c}{\textbf{CIFAR10 }}                                           \\
                Max                                      & \textbf{0.456} & 0.092          & 0.417          & 0.373 & 0.426          & 0.165        \\
                Standardization                          & 0.382          & 0.049          & \textbf{0.444} & 0.166 & 0.412          & 0.166        \\
                $l_2$-norm                                  & 0.402          & \textbf{0.443} & 0.433          & 0.367 & 0.186          & 0.192        \\
                None                                     & \textbf{0.379} & -0.187         & 0.080          & 0.360 & 0.342          & 0.172        \\
                \multicolumn{1}{l|}{\textit{per layer}}  & \textbf{0.421} & 0.289          & 0.409          & 0.362 & 0.347          & 0.056        \\
                \midrule\bottomrule
            \end{tabular}
        }
        \label{tab:resnet50_rank}
    \end{minipage}
    \begin{minipage}[t]{0.435\linewidth} 
        \caption{Comparison on Accuracies (\%) and speed up up ratios with recent pruning works on ImageNet.}
    \vspace{5pt}
    \resizebox{1\columnwidth}{!}{
        \begin{tabular}{rcc} 
            \toprule\midrule
            \multicolumn{1}{c}{Method}            & Pruned                  & Speed Up      \\ 
            \midrule
            \multicolumn{1}{l}{\textbf{Resnet50}} &                         &               \\
            Weight                                & 75.12$_{-1.03}$           & $1.99\times$  \\
            C-OBD~\citep{wang2019eigendamage}                                 & 74.86$_{-1.29}$                   & $1.99\times$          \\
            C-OBS~\citep{wang2019eigendamage}                                 & 75.48$_{-0.67}$                   & $2.01\times$          \\
            EigenDamage~\citep{wang2019eigendamage}                           & 75.30$_{-0.85}$                   & $2.00\times$          \\
            Taylor~\citep{molchanov2016pruning}                                & 75.26$_{-0.89}$           & $2.00\times$  \\
            OBA(Ours)                             & \textbf{75.62}$_{-0.53}$  & $2.00\times$  \\ 
            \midrule
            \multicolumn{1}{l}{\textbf{ViT-B/16}} &                         &               \\
            Weight                                & 77.03$_{-4.04}$          & $1.32\times$  \\
            Taylor~\citep{molchanov2016pruning}                               & 77.65$_{-3.42}$          & $1.31\times$  \\
            OBA(Ours)                             & \textbf{79.64}$_{-1.43}$ & $1.30\times$  \\
            \midrule\bottomrule
            \end{tabular}
    }
    \label{tab:imagenet}
    \end{minipage}
    \vspace{-10pt}
\end{table}

\subsubsection{Importance Score Rank Characterization}

We first measure the ability of our method to characterize the importance of each neuron for structured pruning. Intuitively, the importance score of a neuron should be positively correlated to the change of loss when removing a neuron. This can be implemented by masking the neuron's output in the evaluation process. Specifically, we mask the output of each neuron in pre-defined layers and compare the change of loss between the masked model and unmasked model, which is refered to as ground truth importance. Then we calculate the Spearman's rank correlation of the importance scores with estimated importance scores from different methods. The higher the Spearman's rank score, the better the method captures the importance. We select the output neurons of the first convolutional layer and the three following residual blocks of ResNet32~\citep{He_2016_CVPR} as candidates. For per-layer case, Spearman correlation is calculated within each layer and averaged. For all-layer case, before calculating Spearman correlation, we first normalize the importance scores of each layer and concatenate them together. Different layer-wise normalization methods are considered in the evaluation and are detailedly introduced in \cref{sec:normalization}.

As shown in \cref{tab:resnet50_rank}, with evaluation within each layer, OBA yields the best rank similarity to the ground truth importance among all methods. In the all-layer importance condition, our method also yields good performance on different normalization methods. It is noteworthy that methods such as Weight, Taylor, and OBD, although potentially exhibiting higher correlation scores in importance under specific normalization schemes, may also yield notably low correlation scores under certain normalization techniques. In contrast, OBA consistently maintains relatively high correlation scores irrespective of the normalization method applied. This proves our method's capability of importance capturing, along with its robustness to different normalization methods.

\subsubsection{One-shot Pruning Results}
Next, we conduct experiments to evaluate the performance of OBA on pruning towards a specific FLOPs percentage in one-shot pruning workflow. As the FLOPs and number of parameters are not linearly dependent w.r.t. the number of neurons in the neural network, it's hard to calculate a pruning ratio under which the network is pruned into a predefined FLOPs or parameter number. Thus we prune the network for several steps with an increasing pruning ratio. FLOPs and number of parameters are calculated after each step to check whether the network satisfies the target. We first evaluate our method on ImageNet with ResNet50~\citep{He_2016_CVPR} and ViT-B/16~\citep{dosovitskiy2020image}. As shown in \cref{tab:imagenet}, our method realizes a $2\times$ speed up at ImageNet on ResNet50 with an accuracy decrease of only $0.53\%$. Our method outperforms other methods on both ResNet50 and ViT-B/16, which demonstrates the effectiveness of our method on large-scale datasets. On ViT-B/16, our method achieves a $1.30\times$ speed up with an accuracy decrease of $1.43\%$, which is far smaller than those achieved by Taylor criteria and Weight criteria, demonstrating our proposed criteria's superiority over Weight criteria and the first-order Taylor criteria. 

Next we evaluate our method on CIFAR10 and CIFAR100 datasets. The results are shown in \cref{tab:oneshot_results}. Our method achieves the best results on ResNet32 model across the both datasets. When pruning network towards a small $6\%$ target FLOPs, our method on ResNet32 surpasses other methods a lot, with only $0.79\%$ accuracy loss on CIFAR10 datasets. On CIFAR10 with VGG19 the performance of OBA is slightly worse, but still comparable to other methods, with relatively lower FLOPs. We observe an interesting phenomenon that the parameter reduction of models pruned with OBA are slightly higher than other methods under similar FLOPs, which indicates that our method tends to prune neurons with relatively higher FLOPs, and is more suitable for applications of lower FLOPs requirements.

\begin{table}[htbp]
    \centering
    \caption{The accruacies (\%), weights reduction (\%), and FLOPs reduction (\%) of different methods across CIFAR10 and CIFAR100 under one-shot pruning with ResNet32 and VGG19.}
    \label{tab:oneshot_results}
    \resizebox{\textwidth}{!}{
    \begin{tabular}{r|ccc|ccc||ccc|ccc} 
    \toprule\midrule
    \multicolumn{1}{c}{\multirow{2}{*}{Method}}      & \multicolumn{6}{c}{\textbf{~CIFAR10 }}                                                                      & \multicolumn{6}{c}{~\textbf{CIFAR100 }}                                                  \\ 
    \cmidrule(l){2-13}
    \multicolumn{1}{c}{}                             & Acc            & Weights & \multicolumn{1}{c}{FLOPs} & Acc            & Weights & \multicolumn{1}{c}{FLOPs} & Acc            & Weights & \multicolumn{1}{c}{FLOPs} & Acc            & Weights & FLOPs  \\ 
    \midrule
    \multicolumn{1}{l|}{\textbf{VGG19(Baseline)}}    & 94.17          &         &                           &                &         &                           & 73.34          &         &                           &                &         &        \\
    NN Slimming~\citep{liu2017learning}                                      & 92.84          & 80.07   & 42.65                     & 85.01          & 97.85   & 97.89                     & 71.89          & 74.60   & 38.33                     & 58.69          & 97.76   & 94.09  \\
    C-OBD~\citep{wang2019eigendamage}                                            & 94.04          & 82.01   & 38.18                     & 92.34          & 97.68   & 77.39                     & 72.23          & 77.03   & 33.70                     & 58.07          & 97.97   & 77.55  \\
    C-OBS~\citep{wang2019eigendamage}                                            & 94.08          & 76.96   & 34.73                     & 91.92          & 97.27   & 87.53                     & 72.27          & 73.83   & 38.09                     & 58.87          & 97.61   & 91.94  \\
    Kron-OBD~\citep{wang2019eigendamage}                                         & 94.00          & 80.40   & 38.19                     & \textbf{92.92} & 97.47   & 81.44                     & 72.29          & 77.24   & 37.90                     & 60.70          & 97.56   & 82.55  \\
    Kron-OBS~\citep{wang2019eigendamage}                                         & 94.09          & 79.71   & 36.93                     & 92.56          & 97.32   & 80.39                     & 72.12          & 74.18   & 36.59                     & 60.66          & 97.48   & 83.57  \\
    EigenDamage~\citep{wang2019eigendamage}                                      & 93.98          & 78.18   & 37.13                     & 92.29          & 97.15   & 86.51                     & \textbf{72.90} & 76.64   & 37.40                     & 65.18          & 97.31   & 88.63  \\
    Weight                                           & 93.85          & 68.57   & 37.21                     & 91.85          & 97.02   & 86.93                     & 72.14          & 67.89   & 37.02                     & 54.63          & 95.90   & 88.30  \\
    Taylor~\citep{molchanov2016pruning}                                           & \textbf{94.11} & 62.29   & 38.28                     & 92.29          & 93.89   & 86.29                     & 71.82          & 55.58   & 37.76                     & 66.65          & 93.12   & 88.28  \\
    OBA~(Ours)                                       & 93.9           & 56.63   & 38.06                     & 92.48          & 91.89   & 86.27                     & 72.36          & 53.33   & 37.62                     & \textbf{66.72} & 92.74   & 88.80  \\ 
    \midrule
    \multicolumn{1}{l|}{\textbf{ResNet32(Baseline)}} & 95.3           &         &                           &                &         &                           & 78.17          &         &                           &                &         &        \\
    C-OBD~\citep{wang2019eigendamage}                                            & 95.11          & 70.36   & 66.18                     & 91.75          & 97.30   & 93.50                     & 75.70          & 66.68   & 67.53                     & 59.52          & 97.74   & 94.88  \\
    C-OBS~\citep{wang2019eigendamage}                                            & 95.04          & 67.90   & 76.75                     & 90.04          & 95.49   & 97.39                     & 75.16          & 66.83   & 76.59                     & 58.20          & 91.99   & 96.27  \\
    Kron-OBD~\citep{wang2019eigendamage}                                         & 95.11          & 63.97   & 63.41                     & 92.57          & 96.11   & 94.18                     & 75.86          & 63.92   & 62.97                     & 62.42          & 96.42   & 95.85  \\
    Kron-OBS~\citep{wang2019eigendamage}                                         & 95.14          & 64.21   & 61.89                     & 92.76          & 96.14   & 94.37                     & 75.98          & 62.36   & 60.41                     & 63.62          & 93.56   & 95.65  \\
    EigenDamage~\citep{wang2019eigendamage}                                      & 95.17          & 71.99   & 70.25                     & 93.05          & 96.05   & 94.74                     & 75.51          & 69.80   & 71.62                     & 65.72          & 95.21   & 94.62  \\
    Weight                                           & 94.51          & 55.47   & 71.99                     & 92.07          & 91.61   & 94.09                     & 75.72          & 65.76   & 70.08                     & 66.09          & 95.44   & 94.63  \\
    Taylor~\citep{molchanov2016pruning}                                           & 95.06          & 66.57   & 71.40                     & 93.32          & 94.88   & 94.05                     & 76.13          & 64.27   & 70.11                     & 64.56          & 90.35   & 94.99  \\
    OBA~(Ours)                                       & \textbf{95.19} & 63.90   & 71.13                     & \textbf{93.45} & 93.68   & 94.51                     & \textbf{76.47} & 64.34   & 70.02                     & \textbf{67.81} & 94.43   & 94.40  \\
    \midrule\bottomrule
    \end{tabular}
    }
    \vspace{-10pt}
    \end{table}

    \begin{figure}[t]
        \centering
        \begin{minipage}[t]{0.24\textwidth}
        \centering
        \includegraphics[width=3.7cm]{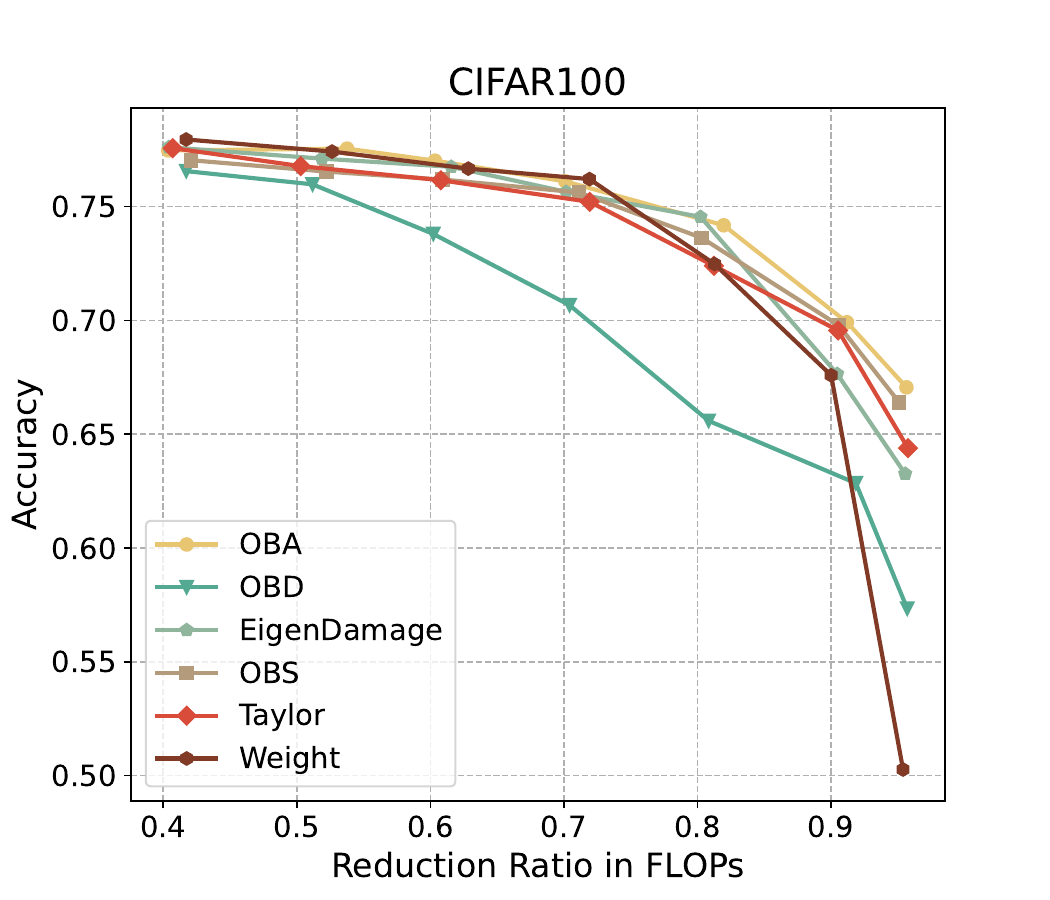}
        \end{minipage}
        \begin{minipage}[t]{0.24\textwidth}
        \centering
        \includegraphics[width=3.7cm]{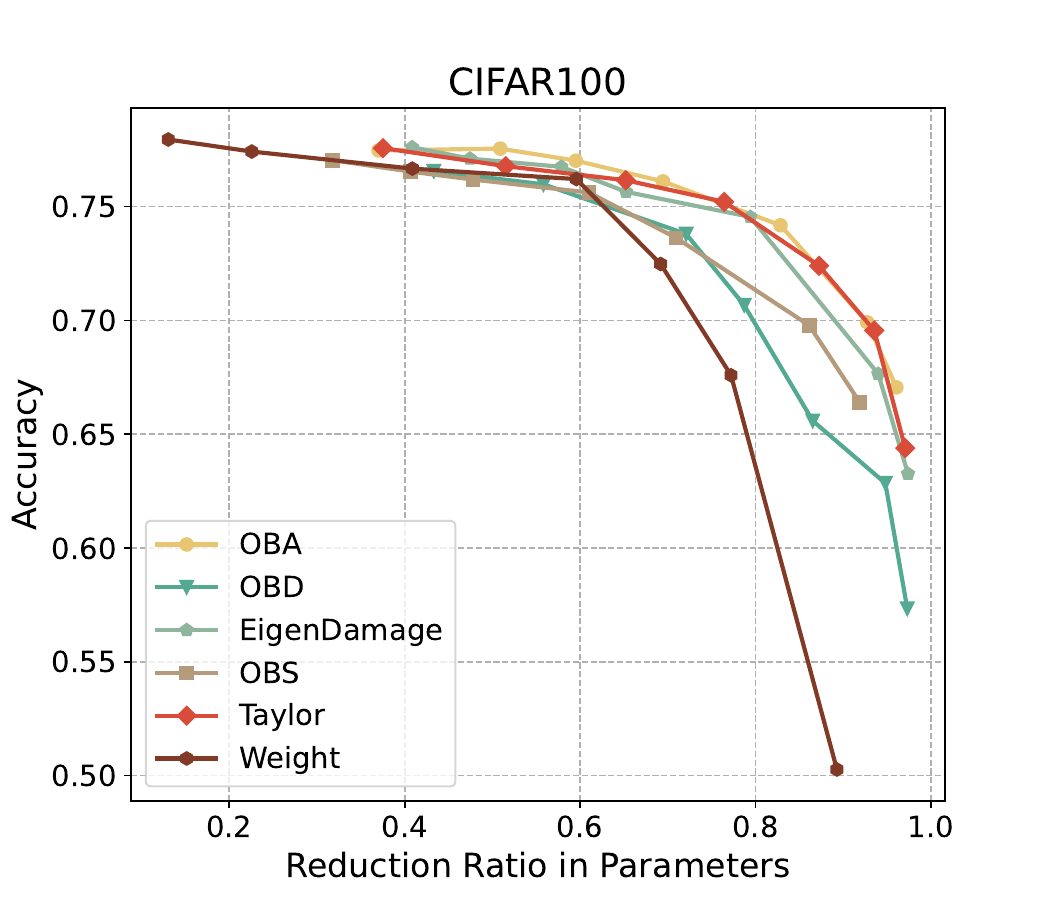}
        \end{minipage}
        \begin{minipage}[t]{0.24\textwidth}
            \centering
            \includegraphics[width=3.7cm]{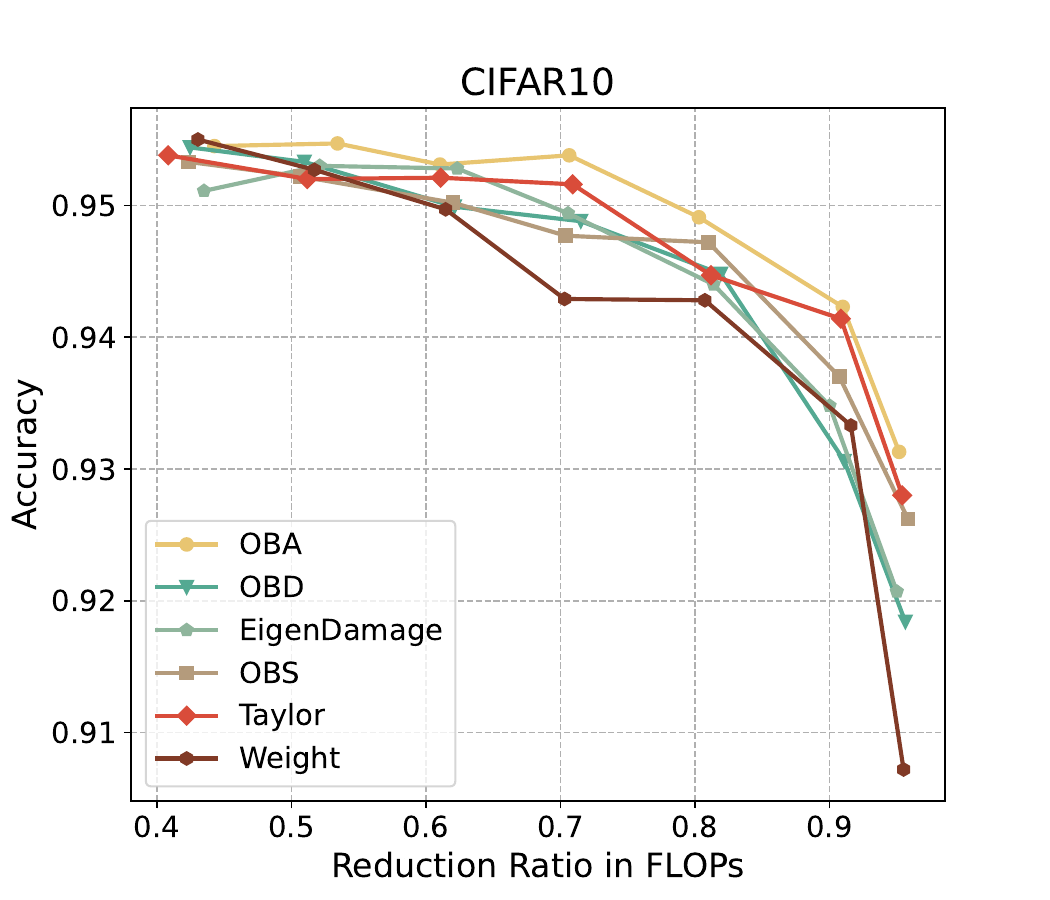}
            \end{minipage}
            \begin{minipage}[t]{0.24\textwidth}
                \centering
                \includegraphics[width=3.7cm]{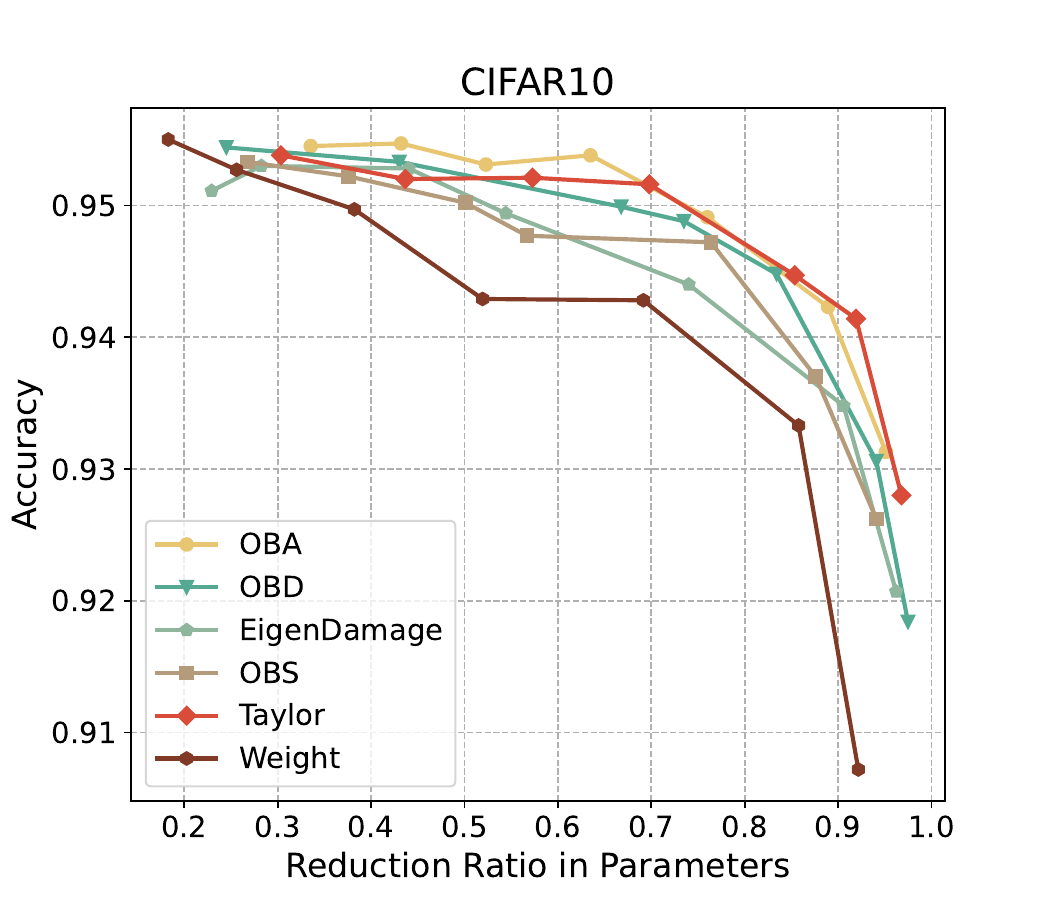}
                \end{minipage}
        \caption{Iterative pruning results on CIFAR10 and CIFAR100 with ResNet32.}\label{fig:iterative_prune}\vspace{-16pt}
    \end{figure}

\subsubsection{Iterative Pruning Results}
In this subsection, we evaluate the performance of OBA under iterative pruning workflow on ResNet32. Specifically, we set a list of target FLOPs for each dataset and model, and iteratively prune the model from largest FLOPs to smallest FLOPs. Fine-tuning is conducted after each pruning step. In terms of FLOPs, the accuracy loss for each iteration of our method is lower than other methods, as shown in \cref{fig:iterative_prune}. This validates the effectiveness of our proposed criteria. As for parameter reduction, our method yields less advantage over other methods, which is consistent with the results of one-shot pruning.

\subsection{Unstructured Pruning}
We also evaluate the performance of OBA on unstructured pruning task. We follow a similar setting of the multi-stage pruning in CHITA~\citep{benbaki2023fast} to have a fair comparison. Since Taylor~\citep{molchanov2016pruning}, Weight, and OBD~\citep{obd} all obtain the importance scores from each parameter, we can ignore their importance gathering steps and add them into comparison. The results are shown in \cref{tab:cifar10} and \cref{tab:imagenet}. Given the varying performance of the initial unpruned networks, we directly compare the accuracy ratio relative to the raw accuracy of all methods. The main version of CHITA, i.e. CHITA-CD, has very large computational time and memory cost on Resnet50, making itself infeasible for such huge networks. Thus we implemented the more efficient CHITA-BSO for comparison. It can be seen that Taylor and OBD fails on this task as their accuracies rapidly fall into $10\%$ in the first pruning step. OBA's results on high sparsities surpass CHITA++ by a huge margin, proving itself to be effective in unstructured pruning task.

\begin{table}
    \centering
    \caption{The unstructured pruning results on CIFAR-10 dataset with ResNet-20.}
    \label{tab:cifar10}
    \resizebox{\textwidth}{!}{
    \begin{tabular}{cccccccccll} 
    \toprule\midrule
    \multirow{2}{*}{Sparsity} & \multicolumn{2}{c}{Taylor (91\%)} & \multicolumn{2}{c}{OBD (91\%)} & \multicolumn{2}{c}{Weight (91\%)} & \multicolumn{2}{c}{OBA  (91\%)} & \multicolumn{2}{l}{CHITA++ (91.36\%)}  \\ 
    \cmidrule(lr){2-3}\cmidrule(lr){4-5}\cmidrule(lr){6-7}\cmidrule(lr){8-9}\cmidrule(lr){10-11}
                              & Accuracy (\%) & Ratio (\%)        & Accuracy (\%) & Ratio (\%)     & Accuracy (\%) & Ratio (\%)        & Accuracy (\%) & Ratio (\%)      & Accuracy (\%) & Ratio (\%)             \\ 
    \midrule
    0.1                       & 11.00         & 14.45             & 10.03         & 13.17          & 90.66         & 99.63             & 90.83         & \textbf{99.81}  & -             & -                      \\
    0.2                       & 11.00         & 14.45             & 10.03         & 13.17          & 90.82         & 99.80             & 90.90         & \textbf{99.89}  & -             & -                      \\
    0.3                       & 10.00         & 13.14             & 10.03         & 13.17          & 90.67         & 99.64             & 90.65         & 99.62           & 91.25         & \textbf{99.88}         \\
    0.4                       & 10.80         & 14.19             & 10.02         & 13.16          & 90.67         & 99.64             & 90.35         & 99.29           & 91.20         & \textbf{99.82}         \\
    0.5                       & 10.00         & 13.14             & 10.01         & 13.15          & 90.79         & \textbf{99.77}    & 90.57         & 99.53           & 91.04         & 99.65                  \\
    0.6                       & 8.20          & 10.77             & 10.00         & 13.14          & 90.23         & 99.15             & 90.69         & \textbf{99.66}  & 90.78         & 99.37                  \\
    0.7                       & 10.00         & 13.14             & 10.00         & 13.14          & 88.83         & 97.62             & 89.94         & 98.84           & 90.38         & \textbf{98.93}         \\
    0.8                       & 10.00         & 13.14             & 10.00         & 13.14          & 85.03         & 93.44             & 89.64         & \textbf{98.51}  & 88.72         & 97.11                  \\
    0.9                       & 10.00         & 13.14             & 10.52         & 13.82          & 67.00         & 73.63             & 86.27         & \textbf{94.80}  & 79.32         & 86.82                  \\
    \midrule\bottomrule
    \end{tabular}
    }
    \end{table}

    \begin{table}
        \centering
        \caption{The unstructured pruning results on Imagenet dataset with ResNet-50.}
        \label{tab:imagenet}
        \resizebox{\textwidth}{!}{
            \begin{tabular}{cccccccccll} 
                \toprule\midrule
                \multirow{2}{*}{Sparsity} & \multicolumn{2}{c}{Taylor (76.13\%)} & \multicolumn{2}{c}{OBD (76.13\%)} & \multicolumn{2}{c}{Weight
                  (76.13\%)} & \multicolumn{2}{c}{OBA~ (76.13\%)} & \multicolumn{2}{l}{CHITA-BSO++ (77.01\%)}  \\ 
                  \cmidrule(lr){2-3}\cmidrule(lr){4-5}\cmidrule(lr){6-7}\cmidrule(lr){8-9}\cmidrule(lr){10-11}
                                          & Accuracy (\%) & Ratio (\%)           & Accuracy (\%) & Ratio (\%)        & Accuracy (\%) & Ratio (\%)             & Accuracy (\%) & Ratio (\%)         & Accuracy (\%) & Ratio (\%)                 \\ 
                \midrule
                0.1                       & 0.11          & 0.14                 & 0.10          & 0.13              & 75.30         & 98.91                  & 75.67         & 99.40              & 77.00         & \textbf{99.98}             \\
                0.2                       & 0.11          & 0.14                 & 0.10          & 0.13              & 75.23         & 98.82                  & 75.40         & 99.04              & 76.91         & \textbf{99.87}             \\
                0.3                       & 0.10          & 0.13                 & 0.10          & 0.13              & 74.57         & 97.95                  & 74.93         & 98.42              & 76.87         & \textbf{99.82}             \\
                0.4                       & 0.11          & 0.14                 & 0.10          & 0.13              & 73.54         & 96.60                  & 74.50         & 97.86              & 76.59         & \textbf{99.46}             \\
                0.5                       & 0.10          & 0.13                 & 0.10          & 0.13              & 70.73         & 92.91                  & 73.42         & 96.44              & 76.01         & \textbf{98.70}             \\
                0.6                       & 0.08          & 0.11                 & 0.10          & 0.13              & 64.97         & 85.34                  & 70.88         & \textbf{93.10}     & 68.89         & 89.46                      \\
                0.7                       & 0.10          & 0.13                 & 0.10          & 0.13              & 48.09         & 63.17                  & 65.57         & \textbf{86.13}     & 64.38         & 83.60                      \\
                0.8                       & 0.10          & 0.13                 & 0.09          & 0.12              & 16.08         & 21.12                  & 47.14         & \textbf{61.92}     & 26.21         & 34.03                      \\
                0.9                       & 0.10          & 0.13                 & 0.10          & 0.13              & 0.80          & 1.05                   & 5.65          & \textbf{7.43}      & 0.43          & 0.56                       \\
                \midrule\bottomrule
                \end{tabular}
        }
        \end{table}

\begin{table}[t!]
    \centering
    \begin{subfigure}{0.3\linewidth}
        \resizebox{\linewidth}{!}{ 
        \begin{tabular}{ccc} 
            \toprule\midrule
                            & Resnet32 & ViT-B/16  \\ 
            \midrule
            Regular Traning & 0.326    & 0.252     \\
            Upper Series    & 0.881    & 0.799     \\
            Lower Series    & 0.819    & 1.166     \\
            Parallel        & -        & 1.944     \\
            Total           & 2.072    & 4.027     \\
            \midrule\bottomrule
            \end{tabular}
        }
        \caption{}
        \label{tab:oba_time}
    \end{subfigure}
    \begin{subfigure}{0.415\textwidth}
        \centering
        \includegraphics[width=\linewidth]{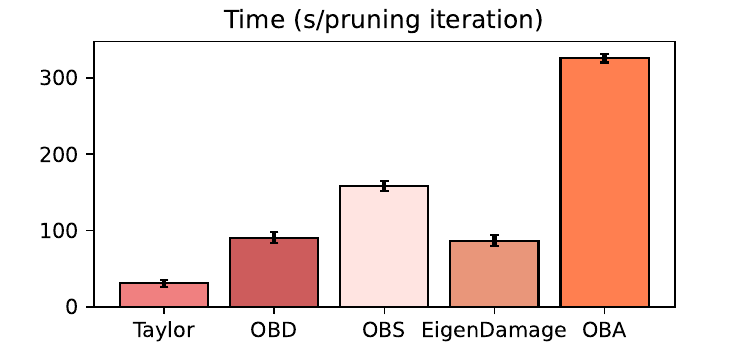}
        \caption{}
        \label{fig:time_resnet32}
    \end{subfigure}
    \begin{subfigure}{0.265\textwidth}
        \centering
        \includegraphics[width=\linewidth]{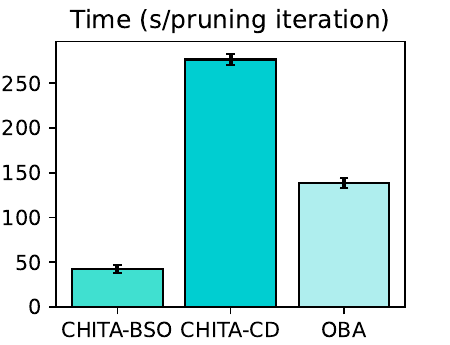}
        \caption{}
        \label{fig:time_resnet20}
    \end{subfigure}
    \vspace{-5pt}
    \caption{The running time (s) of OBA on different models and datasets. \textbf{(a)} The specific running time (s/iteration) of each part of OBA on ResNet32 and ViT-B/16. \textbf{(b)} The running time of OBA on ResNet32 with CIFAR100. \textbf{(c)} The running time of OBA on ResNet20 with CIFAR10.}
    \label{tab:run_time}
\end{table}

\vspace{-3pt}
\subsection{Run Time Analysis}
\vspace{-3pt}
Here, we empirically study the time consumption of OBA. \Cref{tab:oba_time} shows the time consumption of each part of OBA, and the time costed by regular training. It can be seen that the time cost of computing parallel connectivity is the most time-consuming part of OBA, nearly same to the time of series connectivity. In network structures that do not contain multiplication operations, the time cost of OBA would be much lower. In \cref{fig:time_resnet32}, 200 batches of data with a batch size of 64 are leveraged in each pruning iteration. In each pruning stage we conduct 50 pruning iterations to gradually prune the network, and the overall computation time of pruning would be around 4.2 hours, which is acceptable in the actual pruning scenarios. In \cref{fig:time_resnet20}, we samely use 200 batches of data to calculate gradients. The mainly proposed method in CHITA, CHITA-CD, would require twice as much time compared to our approach. However, the performance of both CHITA-CD and CHITA-BSO under high sparsity is worse than OBA, showcasing the efficiency of our method.

\section{Conclusion and Limitation}
\vspace{-6pt}
In this paper, we propose Optimal Brain Apoptosis, a novel method for pruning neural networks. We first provide theoretical analysis on modern neural network structures to figure out nonzero Hessian submatrix conditions between layers. Then we propose an efficient approach that directly calculates the Hessian-vector product values for each parameter in the network, thereby calculating the second-order Taylor expansion for each parameter without any approximation. We empirically demonstrate the efficacy of our method on both structured pruning and unstructured pruning.

\paragraph{Limitation} OBA, in its current form, can be applied to network structures including MLPs, CNNs, and Transformers. For networks with more complex architectures, like RNNs and State Space Models that handle time-series data, computing the Hessian matrix becomes more difficult and necessitates additional research. This is an interesting area that warrants further exploration in the future.

\section{Acknowledgement}
We sincerely appreciate the dedication and valuable feedback provided by all anonymous reviewers. This research was partially supported by the National Natural Science Foundation of China under Grants 62073066 and 62405255, as well as the Fundamental Research Funds for the Central Universities under Grant N2226001. Additionally, it received support from the 111 Project under Grant B16009 and the Intel Neuromorphic Research Community (INRC) Grant Award (RV2.137.Fang).

\bibliography{oba}
\bibliographystyle{iclr2025_conference}

\clearpage
\appendix
\setcounter{figure}{0}
\renewcommand{\thefigure}{A\arabic{figure}}
{\LARGE{\textsc{Appendix}}}
\section{Implementation Details}\label{sec:implementation}
In our structured pruning experiments, we align our settings to EigenDamage~\citep{wang2019eigendamage}, which is a good Hessian based pruning method. For each pruning step, we obtain the importance score from $200$ batches of data to calculate the importance score. We set the batch size to $64$ and the learning rate to $0.001$ for the fine-tuning process. We use the SGD optimizer with a momentum of $0.9$ and a weight decay of $5\times10^{-4}$. The learning rate is divided by $10$ at the $80$th and $120$th epochs. We set the maximum epochs to $150$ for CIFAR10 and CIFAR100 datasets. For ImageNet experiments, we use the same settings as \citet{fang2023depgraph} of ResNet50 and ViT-B/16. We set the maximum epochs to $100$ for ImageNet experiments.

\begin{algorithm}[!h]
    \caption{Importance Score Acquisition of OBA}
    \label{alg:importance}
    \renewcommand{\algorithmicrequire}{\textbf{Input:}}
    \renewcommand{\algorithmicensure}{\textbf{Output:}}
    \begin{algorithmic}[1]
        \REQUIRE model $m$ and its parameters $\theta$, a batch of data $\mathcal{D}$  
        \ENSURE parameter importance $\mathcal{I}$    
        \STATE Initialized importance dict $\mathcal{I}$
        \STATE Conduct one forward propagation and back propagation process on model $m$ with data $\mathcal{D}$ and record the output gradient $\frac{\partial \mathcal{L}}{\partial y^{(l)}}$ for each layer $l$
        \FOR{parameter layer $l$ in $m$}
            \STATE$x^{(l)}$.backward($\frac{\partial \mathcal{L}}{\partial y^{(l)}} \mathbf{J}^{(l)}_{\delta W^{(l)}}$)
        \ENDFOR
        \FOR{parameter layer $l$ in $m$}
            \STATE $\mathcal{I}[l]\leftarrow \delta\theta^{(l)}\odot\theta^{(l)}.\text{grad}$
        \ENDFOR
        \STATE Conduct JVPF according to \cref{eq:surrogate_f} and record $\hat{X}^{(l)}$ for each layer $l$
        \STATE $m$.zero\_grad()
        \FOR{parameter layer $l$ in $m$}
            \STATE $\mathcal{I}[l]\leftarrow \mathcal{I}[l]+\frac{\partial \mathcal{L}}{\partial y^{(l)}} \frac{\partial y^{(l)}}{\partial \theta^{(l)}}\big|_{\hat{X}^{(l)}}\odot\delta\theta^{(l)}$
            \IF{layer $l$ is attention module}
                \STATE $\hat{S} = \operatorname{softmax}\bigl((Q^{(l)}\hat{K}^{(l)\mathsf{T}} + \hat{Q}^{(l)}K^{(l)\mathsf{T}})\big/\sqrt{d^{(l)}_k}\bigr)$
                \STATE $S = \operatorname{softmax}\bigl(Q^{(l)}K^{(l)\mathsf{T}}\big/\sqrt{d^{(l)}_k}\bigr)$
                \STATE $\hat{O} = \hat{S}V + S\hat{V}$, $O = SV$
                \STATE $\hat{O}$.backward($\frac{\partial \mathcal{L}}{\partial O}$)
            \ENDIF
        \ENDFOR
        \FOR{parameter layer $l$ in $m$}
            \STATE $\mathcal{I}[l]\leftarrow \mathcal{I}[l] + \delta\theta^{(l)}\odot\theta^{(l)}.\text{grad}$
        \ENDFOR
        \STATE $m$.zero\_grad()
    \end{algorithmic}
\end{algorithm}

\section{Algorithmic Details}\label{sec:importance_acquisition}
As can be seen in \cref{alg:importance}, We first conduct a forward propagation process on the network and record the output gradient of the loss w.r.t. the output of each layer (line 1-2). Then we back-propagate these gradients for each parameter layer with corresponding weight $\delta \theta$ to obtain the term $\sum_{l'\in\mathbf{L}_{\text{up}}}\sum_{j}\frac{\partial^2 \mathcal{L}}{\partial\theta^{(l')}_j\partial \theta^{(l)}}\delta\theta^{(l')}_j$ (line 3-5). 

1. \textbf{Upper Series Connectivity:} These terms are recorded in the gradient of the corresponding parameters, so that we can obtain $\sum_{l'\in\mathbf{L}_{\text{up}}}\sum_{j}\frac{\partial^2 \mathcal{L}}{\partial\theta^{(l')}_j\partial \theta^{(l)}}\delta\theta^{(l')}_j\odot\delta \theta^{(l)}$ in \cref{eq:series_w} by multiplying the parameters with their gradients (line 6-7). 

2. Next, We obtain $\hat{X}^{(l)}_{mn}=\sum_{l_{\text{low}}\in\mathbf{L}_{\text{low}}}\sum_{k}\frac{\partial X^{(l)}_{mn}}{\partial \theta^{(l_{\text{low}})}_k}\delta\theta^{(l_{\text{low}})}_k$ for all parameter layers through the JVPF, these values are useful for calculating \textbf{Lower Series Connectivity} cases and \textbf{Parallel Connectivity} cases (line 9). 

3. \textbf{Lower Series Connectivity:} We obtain the latter term of \cref{eq:series_w} and add them into the importance scores (line 12).

4. \textbf{Parallel Connectivity:} In the meantime, for all attention layers that induce parallel connectivity, we back-propagate the gradient with the surrogate inputs $\hat{X}_{\text{left}}$ and $\hat{X}_{\text{right}}$ according to \cref{eq:parallelthm_suml,eq:parallelthm_sumr} (line 13-17), and multiply the parameters with their gradients that are in \textbf{Lower Series Connectivity} with the attention layer (line 20-21).

\section{Normalization Methods}
\label{sec:normalization}
In our implementation, We leverage these normalization methods on importance scores for each group, including:

\paragraph{No Normalization (None)}

    When the normalizer is set to None, the original values of importance scores are returned without any modification. This means that the data is used as-is, with all its original properties (such as scale and distribution) intact:
    \begin{equation*}
        I^\text{normalized}_j = I_j.
    \end{equation*}

\paragraph{Standardization (or Min-Max Normalization)}
    This method scales the data so that it fits within a specific range, typically $0$ to $1$. This is achieved by subtracting the minimum value of the data and then dividing by the range of the data:
    \begin{equation*}
        I^\text{normalized}_j = \frac{I_j - \min(I)}{\max(I) - \min(I)}.
    \end{equation*}

\paragraph{Max Normalization}
    In this approach, every importance score is divided by the maximum importance score of corresponding group to ensure that all the normalized values fall between 0 and 1:
    \begin{equation*}
        I^\text{normalized}_j = \frac{I_j}{\max(I)}.
    \end{equation*}

\paragraph{$l_2$ Normalization}
    This method normalizes the importance scores by dividing it by the $l_2$ norm (Euclidean norm) of the importance scores belonging to the same group. The $l_2$ norm is calculated as the square root of the sum of the squared values:
    \begin{equation*}
        I^\text{normalized}_j = \frac{I_j}{||I||_2}.
    \end{equation*}

\begin{figure}[h!]
    \centering
    \begin{minipage}[t]{0.49\textwidth}
    \centering
    \includegraphics[width=7.4cm]{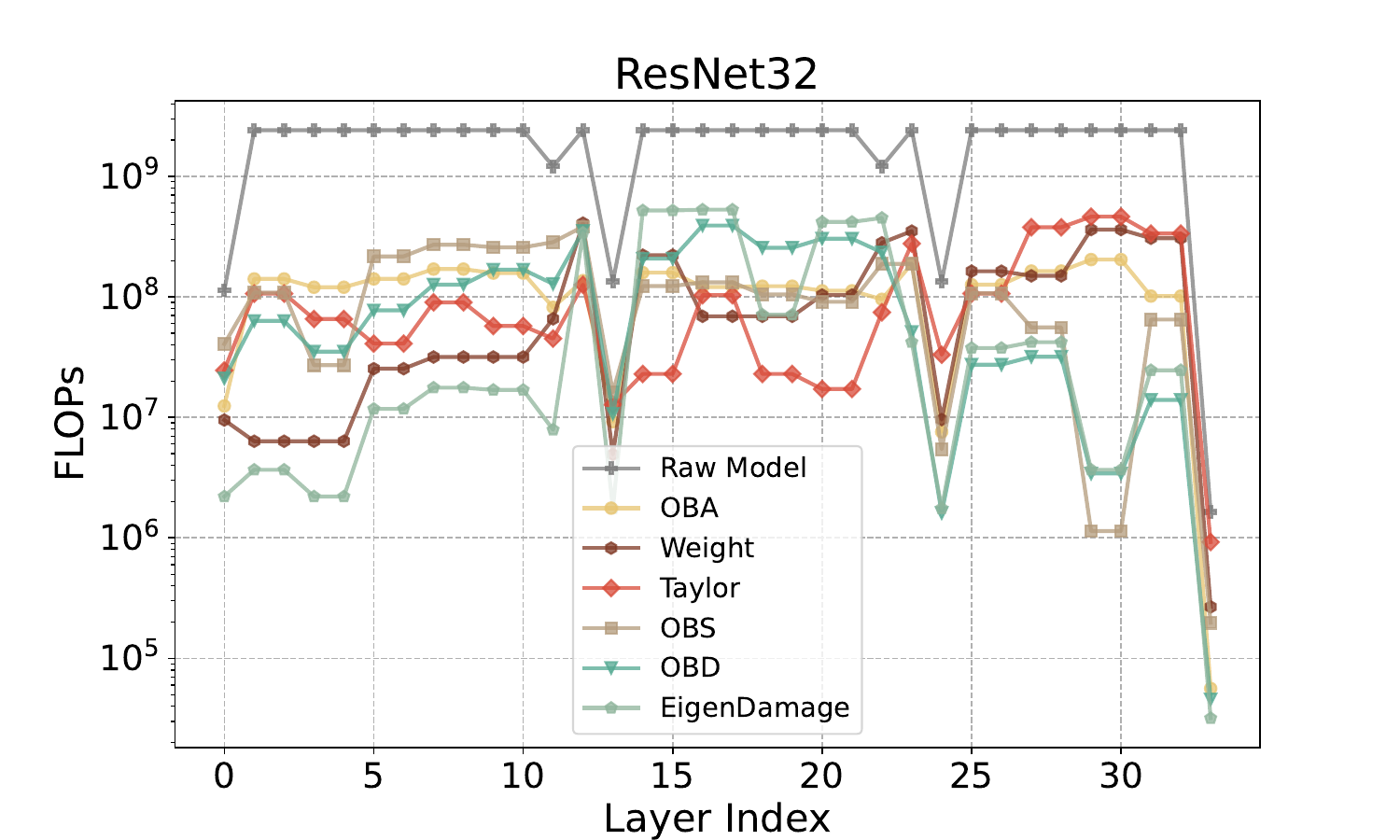}
    \end{minipage}
    \begin{minipage}[t]{0.49\textwidth}
    \centering
    \includegraphics[width=7.4cm]{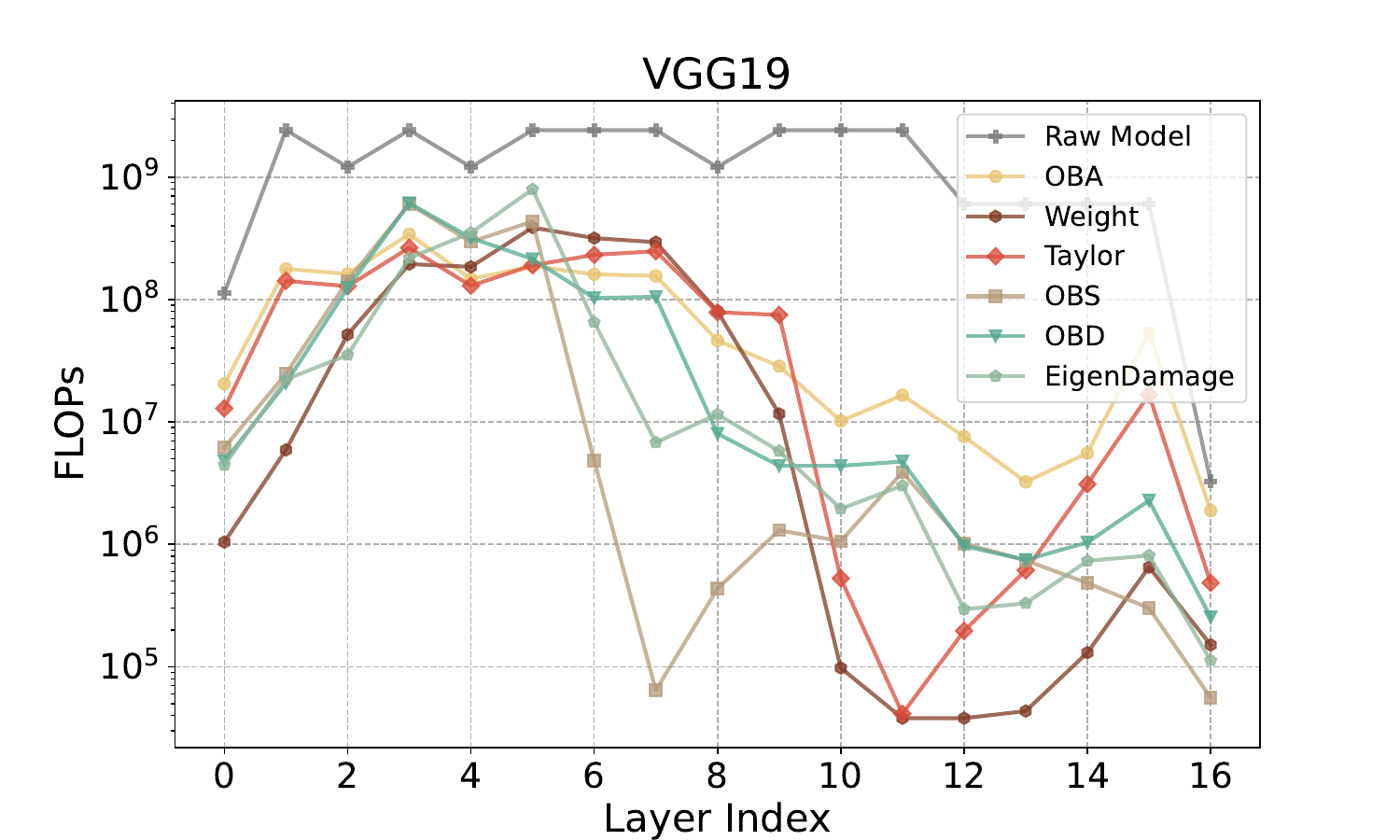}
    \end{minipage}
    \caption{The layer-wise FLOPs for all parameter layers from models pruned by different criteria.}\label{fig:flops}
\end{figure}

\section{Pruned Layers Visualization}
We visualized the layer-wise FLOPs of pruned models by OBA and other methods for ResNet32 and VGG19 on CIFAR100 with a target FLOPs of $6\%$. It can be seen that compared with other methods, the difference of OBA between FLOPs of different layers is smaller, resulting in a smoother model in terms of number of neurons across layers. This significantly helps to improve the model's performance across various datasets and provide useful guidance for researchers to design and prune neural networks.

\section{Proofs}
\subsection{\Cref{thm:series_connect}}
\label{sec:proof_series}
For any two layers $l_{\text{low}}$ and $l_{\text{up}}$ in series connectivity, where $l_{\text{up}}$ is upper than $l_{\text{low}}$. Note that $\frac{\partial^2 \mathcal{L}}{\partial w^{(l_{\text{low}})}\partial b^{(l_{\text{up}})}}$ is a zero matrix because parameters $w^{(l_{\text{low}})}$ and $b^{(l_{\text{up}})}$ are independent of each other. With this prior, $\frac{\partial^2 \mathcal{L}}{\partial\theta_i^{(l_{\text{low}})}\partial\theta_j^{(l_{\text{up}})}}$ is actually $\frac{\partial^2 \mathcal{L}}{\partial\theta_i^{(l_{\text{low}})}\partial w_j^{(l_{\text{up}})}}$. We first calculate term $\sum_j\frac{\partial^2 \mathcal{L}}{\partial\theta_i^{(l_{\text{low}})}\partial w_j^{(l_{\text{up}})}}\delta\theta_i^{(l_{\text{low}})}\delta w_j^{(l_{\text{up}})}$ for the lower layer $l_{\text{low}}$.
\begin{lemma}
    \label{lemma:series_connect_upper}
    \begin{equation}
        \sum_{fhi}\frac{\partial^2 \mathcal{L}}{\partial W^{(l_{\text{up}})}_{fhi}\partial W^{(l_{\text{low}})}_{acd}} \delta W^{(l_{\text{up}})}_{fhi} \delta W^{(l_{\text{low}})}_{acd}= \frac{\partial \mathcal{L}}{\partial y^{(l_{\text{up}})}} \mathbf{J}^{(l_{\text{up}})}_{\delta W^{(l_{\text{up}})}}  \frac{\partial x^{(l_{\text{up}})}}{\partial W^{(l_{\text{low}})}_{acd}} \delta W^{(l_{\text{low}})}_{acd}
    \end{equation}
    and
    \begin{equation}
        \sum_{fhi}\frac{\partial^2 \mathcal{L}}{\partial W^{(l_{\text{up}})}_{fhi}\partial b^{(l_{\text{low}})}_{a}} \delta W^{(l_{\text{up}})}_{fhi} \delta b^{(l_{\text{low}})}_{a}= \frac{\partial \mathcal{L}}{\partial y^{(l_{\text{up}})}} \mathbf{J}^{(l_{\text{up}})}_{\delta W^{(l_{\text{up}})}}  \frac{\partial x^{(l_{\text{up}})}}{\partial b^{(l_{\text{low}})}_{a}} b^{(l_{\text{low}})}_{a},
    \end{equation}
where $\mathbf{J}^{(l_{\text{up}})}_{\delta W^{(l_{\text{up}})}}\in\mathbb{R}^{(l_{\text{out}}\cdot m_{\text{out}}) \times (l_{\text{in}} \cdot m_{\text{in}})}$ is the jacobian matrix of $y^{(l_{\text{up}})}$ with respect to $x^{(l_{\text{up}})}$ taking $\delta W^{(l_{\text{up}})}$ as the weights.
\end{lemma}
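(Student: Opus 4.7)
The plan is to derive the left-hand side through two chain-rule expansions combined with a direct computation using the bilinear forward rule \cref{eq:forward}.

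First I would write the first-order gradient with respect to the lower-layer weight by chaining through the input $X^{(l_{\text{up}})}$ of the upper layer, which is legitimate because series connectivity guarantees a differentiable map from $Y^{(l_{\text{low}})}$ up to $X^{(l_{\text{up}})}$:
\begin{equation*}
\frac{\partial \mathcal{L}}{\partial W^{(l_{\text{low}})}_{acd}} = \sum_{ce}\frac{\partial \mathcal{L}}{\partial X^{(l_{\text{up}})}_{ce}}\,\frac{\partial X^{(l_{\text{up}})}_{ce}}{\partial W^{(l_{\text{low}})}_{acd}}.
\end{equation*}
The second factor depends only on parameters at or below $l_{\text{low}}$ and is therefore independent of $W^{(l_{\text{up}})}$. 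Consequently the directional derivative $\sum_{fhi}\delta W^{(l_{\text{up}})}_{fhi}\,\partial/\partial W^{(l_{\text{up}})}_{fhi}$ only acts on the upstream factor $\frac{\partial \mathcal{L}}{\partial X^{(l_{\text{up}})}_{ce}}$.

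Second, I would expand this upstream factor by one more chain-rule step through $Y^{(l_{\text{up}})}$,
\begin{equation*}
\frac{\partial \mathcal{L}}{\partial X^{(l_{\text{up}})}_{ce}} = \sum_{mn}\frac{\partial \mathcal{L}}{\partial Y^{(l_{\text{up}})}_{mn}}\,\frac{\partial Y^{(l_{\text{up}})}_{mn}}{\partial X^{(l_{\text{up}})}_{ce}},
\end{equation*}
and invoke \cref{eq:forward} to compute the mixed second derivative in closed form:
\begin{equation*}
\frac{\partial^2 Y^{(l_{\text{up}})}_{mn}}{\partial W^{(l_{\text{up}})}_{fhi}\,\partial X^{(l_{\text{up}})}_{ce}} = \delta_{fm}\delta_{hc}\,M^{(l_{\text{up}})}_{nie}.
\end{equation*}
Contracting this pattern with $\delta W^{(l_{\text{up}})}_{fhi}$ collapses the $(fhi)$ sum to $\sum_i \delta W^{(l_{\text{up}})}_{mci}M^{(l_{\text{up}})}_{nie}$, which is precisely the $(mn, ce)$ entry of $\mathbf{J}^{(l_{\text{up}})}_{\delta W^{(l_{\text{up}})}}$ as defined in \cref{thm:series_connect}. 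Re-summing over $mn$ and $ce$, then left-multiplying by the upstream gradient $\frac{\partial \mathcal{L}}{\partial y^{(l_{\text{up}})}}$ and right-multiplying by $\frac{\partial x^{(l_{\text{up}})}}{\partial W^{(l_{\text{low}})}_{acd}}\,\delta W^{(l_{\text{low}})}_{acd}$ reassembles the claimed matrix product on the RHS.

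For the bias identity the argument is identical after replacing $W^{(l_{\text{low}})}_{acd}$ by $b^{(l_{\text{low}})}_a$ throughout: the bilinear second derivative at layer $l_{\text{up}}$ is untouched, and only the terminal factor becomes $\frac{\partial x^{(l_{\text{up}})}}{\partial b^{(l_{\text{low}})}_a}$. The main obstacle is the index bookkeeping around the mixed second derivative of \cref{eq:forward}: one must keep track of the Kronecker pattern produced by simultaneously differentiating in $W^{(l_{\text{up}})}$ and $X^{(l_{\text{up}})}$ and verify that contraction against $\delta W^{(l_{\text{up}})}$ produces the Jacobian $\mathbf{J}^{(l_{\text{up}})}_{\delta W^{(l_{\text{up}})}}$ in exactly the layout used in the theorem, so that the resulting scalar matches the convention in which $\frac{\partial \mathcal{L}}{\partial y^{(l_{\text{up}})}}$ is treated as the upstream row vector multiplying the bilinear contribution of layer $l_{\text{up}}$.
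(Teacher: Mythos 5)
Your argument is correct and is essentially the paper's own proof: both reduce the cross-layer second derivative to the bilinear mixed term $\partial^2 Y^{(l_{\text{up}})}/\partial W^{(l_{\text{up}})}\partial X^{(l_{\text{up}})}$ of \cref{eq:forward}, contract it with $\delta W^{(l_{\text{up}})}$ to recognize $\mathbf{J}^{(l_{\text{up}})}_{\delta W^{(l_{\text{up}})}}$, and chain through $\frac{\partial x^{(l_{\text{up}})}}{\partial W^{(l_{\text{low}})}_{acd}}$ --- you merely differentiate in the opposite order (first in $W^{(l_{\text{low}})}$, then in $W^{(l_{\text{up}})}$), which by symmetry of mixed partials is the identical computation. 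Like the paper, you implicitly treat the backpropagated gradient $\frac{\partial \mathcal{L}}{\partial Y^{(l_{\text{up}})}}$ as insensitive to the perturbation (dropping the $\frac{\partial^2\mathcal{L}}{\partial Y^2}$ cross term), so no gap is introduced beyond what the paper's own derivation already assumes.
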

\begin{proof}
Let $\mathbf{J}^{(l_{\text{up}})}_{W^{(l_{\text{up}})}}=\frac{\partial y^{(l_{\text{up}})}}{\partial x^{(l_{\text{up}})}}$. The element-wise derivative of loss w.r.t. $X^{(l_{\text{up}})}$ is given by
\begin{equation}
    \frac{\partial \mathcal{L}}{\partial X^{(l_{\text{up}})}_{hj}} = \sum_{fg}\frac{\partial \mathcal{L}}{\partial Y^{(l_{\text{up}})}_{fg}} \underbrace{\sum_{ij}W^{(l_{\text{up}})}_{fhi} M^{(l_{\text{up}})}_{gij}}_{\partial Y^{(l_{\text{up}})}_{fg}/\partial X^{(l_{\text{up}})}_{hj}},
\end{equation}
and the element-wise derivative of loss w.r.t. $W^{(l_{\text{up}})}$ is given by
\begin{equation}
    \frac{\partial \mathcal{L}}{\partial W^{(l_{\text{up}})}_{fhi}} = \sum_{g}\frac{\partial \mathcal{L}}{\partial Y^{(l_{\text{up}})}_{fg}} \sum_{j}X^{(l_{\text{up}})}_{hj} M^{(l_{\text{up}})}_{gij}.
\end{equation}
Applying chain rule, we have

\begin{align}
    \sum_{fhi}\frac{\partial^2 \mathcal{L}}{\partial W^{(l_{\text{up}})}_{fhi} \partial W^{(l_{\text{low}})}_{acd}} \delta W^{(l_{\text{up}})}_{fhi} \delta W^{(l_{\text{low}})}_{acd}&= \sum_{fg}\frac{\partial \mathcal{L}}{\partial Y^{(l_{\text{up}})}_{fg}} \sum_{hij} M^{(l_{\text{up}})}_{gij} \delta W^{(l_{\text{up}})}_{fhi} \frac{\partial X^{(l_{\text{up}})}_{hj}}{\partial W^{(l_{\text{low}})}_{acd}} \delta W^{(l_{\text{low}})}_{acd} \nonumber\\
    &= \frac{\partial \mathcal{L}}{\partial y^{(l_{\text{up}})}} \mathbf{J}^{(l_{\text{up}})}_{\delta W^{(l_{\text{up}})}}  \frac{\partial x^{(l_{\text{up}})}}{\partial W^{(l_{\text{low}})}_{acd}} \delta W^{(l_{\text{low}})}_{acd},
\end{align}
and
\begin{align}
    \sum_{fhi}\frac{\partial^2 \mathcal{L}}{\partial W^{(l_{\text{up}})}_{fhi} \partial b^{(l_{\text{low}})}_{a}} \delta W^{(l_{\text{up}})}_{fhi} \delta b^{(l_{\text{low}})}_{a}&= \sum_{fg}\frac{\partial \mathcal{L}}{\partial Y^{(l_{\text{up}})}_{fg}} \sum_{hij} M^{(l_{\text{up}})}_{gij} \delta W^{(l_{\text{up}})}_{fhi} \frac{\partial X^{(l_{\text{up}})}_{hj}}{\partial b^{(l_{\text{low}})}_{a}} b^{(l_{\text{low}})}_{a} \nonumber\\
    &= \frac{\partial \mathcal{L}}{\partial y^{(l_{\text{up}})}} \mathbf{J}^{(l_{\text{up}})}_{\delta W^{(l_{\text{up}})}}  \frac{\partial x^{(l_{\text{up}})}}{\partial b^{(l_{\text{low}})}_{a}} b^{(l_{\text{low}})}_{a}.
\end{align}
\end{proof}

Next we obtain term $\sum_i\frac{\partial^2 \mathcal{L}}{\partial\theta_i^{(l_{\text{low}})}\partial w_j^{(l_{\text{up}})}}\delta\theta_i^{(l_{\text{low}})}\delta w_j^{(l_{\text{up}})}$ for the upper layer $l_{\text{up}}$.
\begin{lemma}
    \label{lemma:series_connect_lower}
    \begin{equation}
    \label{eq:series_connect_lower}
    \sum_{a}\frac{\partial^2 \mathcal{L}}{\partial W^{(l_{\text{up}})}_{fhi} \partial \theta^{(l_{\text{low}})}_{a}} \delta W^{(l_{\text{up}})}_{fhi} \delta \theta^{(l_{\text{low}})}_{a} = \frac{\partial \mathcal{L}}{\partial y^{(l_{\text{up}})}} \frac{\partial y^{(l_{\text{up}})}}{\partial W^{(l_{\text{up}})}_{fhi}}\bigg|_{\hat{X}^{(l_{\text{up}},l_{\text{low}})}}\delta W^{(l_{\text{up}})}_{fhi},
    \end{equation}
    where $\hat{X}^{(l_{\text{up}},l_{\text{low}})}$ is given by
    \begin{equation}
        \hat{X}^{(l_{\text{up}},l_{\text{low}})}_{hj}=\sum_{a}\frac{\partial X^{(l_{\text{up}},l_{\text{low}})}_{hj}}{\partial \theta^{(l_{\text{low}})}_a}\delta\theta^{(l_{\text{low}})}_a.
    \end{equation}
\end{lemma}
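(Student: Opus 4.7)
}
The plan is to start from the closed-form expression for $\partial\mathcal{L}/\partial W^{(l_{\text{up}})}_{fhi}$ that was already derived in the proof of \Cref{lemma:series_connect_upper}, namely
\[
\frac{\partial \mathcal{L}}{\partial W^{(l_{\text{up}})}_{fhi}} = \sum_{g}\frac{\partial \mathcal{L}}{\partial Y^{(l_{\text{up}})}_{fg}} \sum_{j}X^{(l_{\text{up}})}_{hj} M^{(l_{\text{up}})}_{gij},
\]
and differentiate it once more with respect to $\theta^{(l_{\text{low}})}_{a}$. The connectivity tensor $M^{(l_{\text{up}})}$ is a constant, so the only two factors that could carry a derivative in $\theta^{(l_{\text{low}})}$ are $\partial \mathcal{L}/\partial Y^{(l_{\text{up}})}_{fg}$ and $X^{(l_{\text{up}})}_{hj}$. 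I would argue that the first factor is independent of $\theta^{(l_{\text{low}})}$ in a purely series setting: since $l_{\text{low}}$ is strictly below $l_{\text{up}}$ and (by the hypothesis of the theorem) its influence reaches layer $l_{\text{up}}$ only through the input $X^{(l_{\text{up}})}$, the backward message $\partial \mathcal{L}/\partial Y^{(l_{\text{up}})}_{fg}$ is a function only of parameters of layers strictly above $l_{\text{up}}$. This is the step that will carry the most conceptual weight; spelling out why no parallel paths contaminate this message is the main obstacle, and I would handle it by referring to the structural assumption of series connectivity and noting that any parallel interaction would instead be captured by \Cref{thm:parallel}.

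With that reduction in hand, the remaining derivative is only on $X^{(l_{\text{up}})}_{hj}$, giving
\[
\frac{\partial^2 \mathcal{L}}{\partial W^{(l_{\text{up}})}_{fhi}\, \partial \theta^{(l_{\text{low}})}_{a}} = \sum_{g}\frac{\partial \mathcal{L}}{\partial Y^{(l_{\text{up}})}_{fg}} \sum_{j} M^{(l_{\text{up}})}_{gij}\,\frac{\partial X^{(l_{\text{up}})}_{hj}}{\partial \theta^{(l_{\text{low}})}_{a}}.
\]
Next I would multiply by $\delta W^{(l_{\text{up}})}_{fhi}\,\delta\theta^{(l_{\text{low}})}_{a}$ and sum over $a$. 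The sum $\sum_{a}\bigl(\partial X^{(l_{\text{up}})}_{hj}/\partial \theta^{(l_{\text{low}})}_{a}\bigr)\delta\theta^{(l_{\text{low}})}_{a}$ matches the definition of $\hat{X}^{(l_{\text{up}},l_{\text{low}})}_{hj}$ exactly, so it collapses to that surrogate input.

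The final step is a recognition step. Recall from \cref{eq:forward} that $Y^{(l_{\text{up}})}_{fg}$ is linear in $X^{(l_{\text{up}})}$, so $\partial Y^{(l_{\text{up}})}_{fg}/\partial W^{(l_{\text{up}})}_{fhi} = \sum_{j} X^{(l_{\text{up}})}_{hj} M^{(l_{\text{up}})}_{gij}$, and replacing the true input $X^{(l_{\text{up}})}$ by the surrogate $\hat{X}^{(l_{\text{up}},l_{\text{low}})}$ is precisely what the notation $\partial y^{(l_{\text{up}})}/\partial W^{(l_{\text{up}})}_{fhi}\big|_{\hat{X}^{(l_{\text{up}},l_{\text{low}})}}$ means. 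Contracting with $\partial \mathcal{L}/\partial Y^{(l_{\text{up}})}_{fg}$ over $g$ then yields the right-hand side of \cref{eq:series_connect_lower}. The calculation is otherwise bookkeeping on indices, parallel in structure to the proof of \Cref{lemma:series_connect_upper}, but with the roles of the lower and upper variables exchanged and the linearity of $Y$ in $X$ playing the role that linearity in $W$ played there.
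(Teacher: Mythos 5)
Your proposal is correct and is essentially the paper's own argument run in the opposite direction: the paper starts from the right-hand side, expands $\frac{\partial y^{(l_{\text{up}})}}{\partial W^{(l_{\text{up}})}_{fhi}}\big|_{\hat{X}^{(l_{\text{up}},l_{\text{low}})}}$ via \cref{eq:forward}, interchanges the sums over $a$ and $g$, and recognizes the result as the left-hand side, whereas you differentiate the first-order gradient formula and collapse $\sum_a \frac{\partial X^{(l_{\text{up}})}_{hj}}{\partial\theta^{(l_{\text{low}})}_a}\delta\theta^{(l_{\text{low}})}_a$ into the surrogate input. Both routes rest on exactly the same key identification — that the only $\theta^{(l_{\text{low}})}$-dependence being differentiated enters through $X^{(l_{\text{up}})}$, with $\frac{\partial\mathcal{L}}{\partial Y^{(l_{\text{up}})}}$ held fixed — a step the paper leaves implicit and you at least flag explicitly.
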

\begin{proof}
\begin{align}
    \frac{\partial \mathcal{L}}{\partial y^{(l_{\text{up}})}} \frac{\partial y^{(l_{\text{up}})}}{\partial W^{(l_{\text{up}})}_{fhi}}\bigg|_{\hat{X}^{(l_{\text{up}})}}\delta W^{(l_{\text{up}})}_{fhi}&=\sum_{fg}\frac{\partial \mathcal{L}}{\partial Y^{(l_{\text{up}})}_{fg}} \sum_{j}M^{(l_{\text{up}})}_{gij} \sum_{a}\frac{\partial X^{(l_{\text{up}})}_{hj}}{\partial \theta^{(l_{\text{low}})}_{a}} \delta \theta^{(l_{\text{low}})}_{a}\delta W^{(l_{\text{up}})}_{fhi}\nonumber\\
    &= \sum_{a} \sum_{g} \frac{\partial \mathcal{L}}{\partial Y^{(l_{\text{up}})}_{fg}} \sum_{j} M^{(l_{\text{up}})}_{gij} \frac{\partial X^{(l_{\text{up}})}_{hj}}{\partial \theta^{(l_{\text{low}})}_{a}} \delta  \theta^{(l_{\text{low}})}_{a} \delta W^{(l_{\text{up}})}_{fhi}\nonumber\\
    &=\sum_{a}\frac{\partial^2 \mathcal{L}}{\partial W^{(l_{\text{up}})}_{fhi} \partial \theta^{(l_{\text{up}})}_{a}} \delta \theta^{(l_{\text{low}})}_{a} \delta W^{(l_{\text{up}})}_{fhi}.
\label{eq:expanded_rightpart}
\end{align}

\end{proof}

\begin{proof}[Proof of \cref{thm:series_connect}]
    Sum the results in \cref{lemma:series_connect_upper} for upper series connectivity cases with results in \cref{lemma:series_connect_lower} for lower series connectivity cases and we get
    \begin{equation}
        \label{eq:series_w_elementwise}
        \sum_{l'\in\mathbf{L}_{\text{up}}\cup\mathbf{L}_{\text{low}}}\sum_{j}\frac{\partial^2 \mathcal{L}}{\partial\theta^{(l')}_j\partial w^{(l)}_i}\delta\theta^{(l')}_j\delta w^{(l)}_i =\sum_{l_{\text{up}}\in\mathbf{L}_{\text{up}}}\frac{\partial \mathcal{L}}{\partial y^{(l_{\text{up}})}} \mathbf{J}^{(l_{\text{up}})}_{\delta W^{(l_{\text{up}})}}  \frac{\partial x^{(l_{\text{up}})}}{\partial w^{(l)}_i}\delta w^{(l)}_i+\frac{\partial \mathcal{L}}{\partial y^{(l)}} \frac{\partial y^{(l)}}{\partial w^{(l)}_i}\bigg|_{\hat{X}^{(l)}}\delta w^{(l)}_i
    \end{equation}
    with
    \begin{equation}
        \hat{X}^{(l)}_{hj}=\sum_{l_{\text{low}}\in\mathbf{L}_{\text{low}}}\hat{X}^{(l,l_{\text{low}})}_{hj}=\sum_{l_{\text{low}}\in\mathbf{L}_{\text{low}}}\sum_{k}\frac{\partial X^{(l)}_{hj}}{\partial \theta^{(l_{\text{low}})}_k}\delta\theta^{(l_{\text{low}})}_k,\nonumber
    \end{equation}
    and
    \begin{equation}
        \label{eq:series_b_elementwise}
        \sum_{l'\in\mathbf{L}_{\text{up}}\cup\mathbf{L}_{\text{low}}}\sum_{j}\frac{\partial^2 \mathcal{L}}{\partial\theta^{(l')}_j\partial b^{(l)}_i}\delta\theta^{(l')}_j \delta b^{(l)}_i = \sum_{l_{\text{up}}\in\mathbf{L}_{\text{up}}}\frac{\partial \mathcal{L}}{\partial y^{(l_{\text{up}})}} \mathbf{J}^{(l_{\text{up}})}_{\delta W^{(l_{\text{up}})}}  \frac{\partial x^{(l_{\text{up}})}}{\partial b^{(l)}_i}\delta b^{(l)}_i.
    \end{equation}
    Rewrite \cref{eq:series_w_elementwise} and \cref{eq:series_b_elementwise} as vector forms and we obtain \cref{eq:series_w} and \cref{eq:series_b}.
\end{proof}

{Appendix}
\subsection{\Cref{thm:parallel}}
\label{sec:proof_parallel}
Here we follow the notations in \cref{def:parallel} to denote each value of the two layers.
\begin{lemma}
    \label{lem:parallel}
    Let $\hat{X}^{\text{(left)}}\in \mathbb{R}^{l_{row} \times l_{hid}}$ and $\hat{X}^{\text{(right)}}\in \mathbb{R}^{l_{hid} \times l_{col}}$ be two surrogate weight matrices such that
\begin{align}
    \hat{X}_{kn}^{\text{(left)}} &= \sum_{q}\frac{\partial X^{\text{(left)}}_{kn}}{\partial \theta^{(l_{l})}_{q}} \delta \theta^{(l_{l})}_{q},\\
    \hat{X}_{no}^{\text{(right)}} &= \sum_{r}\frac{\partial X^{\text{(right)}}_{no}}{\partial \theta^{(l_{r})}_{r}} \delta \theta^{(l_{r})}_{r}.
\end{align}

Then we have
\begin{equation}
    \sum_{q}\frac{\partial^2 \mathcal{L}}{\partial \theta^{(l_{r})}_{r} \partial \theta^{(l_{l})}_{q}} \delta \theta^{(l_{r})}_{r} \delta \theta^{(l_{l})}_{q}=\frac{\partial \mathcal{L}}{\partial y^{\text{(mul)}}}\frac{\partial y^{\text{(mul)}}}{\partial x^{\text{(right)}}}\bigg|_{\hat{X}^{\text{(left)}}} \frac{\partial x^{\text{(right)}}}{\partial \theta^{(l_{r})}_{r}}\delta \theta^{(l_{r})}_{r}
    \label{eq:parallellem_sumq}
\end{equation}
and
\begin{equation}
    \sum_{r}\frac{\partial^2 \mathcal{L}}{\partial \theta^{(l_{r})}_{r} \partial \theta^{(l_{l})}_{q}} \delta \theta^{(l_{r})}_{r} \delta \theta^{(l_{l})}_{q}=\frac{\partial \mathcal{L}}{\partial y^{\text{(mul)}}}\frac{\partial y^{\text{(mul)}}}{\partial x^{\text{(left)}}}\bigg|_{\hat{X}^{\text{(right)}}} \frac{\partial x^{\text{(left)}}}{\partial \theta^{(l_{l})}_{q}}\delta \theta^{(l_{l})}_{q}.
    \label{eq:parallellem_sumr}
\end{equation}
\end{lemma}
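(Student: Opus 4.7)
}
The plan is to expand the left-hand side by chain rule, isolate the bilinear cross term that arises directly from the multiplication identity $Y^{\text{(mul)}}_{ko}=\sum_{n}X^{\text{(left)}}_{kn}X^{\text{(right)}}_{no}$, and then absorb the sum over $q$ into the surrogate matrix $\hat{X}^{\text{(left)}}$ by its very definition. First I would write the gradient of the loss with respect to a single parameter of the right layer as
\begin{equation*}
\frac{\partial\mathcal{L}}{\partial\theta^{(l_{r})}_{r}}=\sum_{no}\frac{\partial\mathcal{L}}{\partial X^{\text{(right)}}_{no}}\frac{\partial X^{\text{(right)}}_{no}}{\partial\theta^{(l_{r})}_{r}}=\sum_{kno}\frac{\partial\mathcal{L}}{\partial Y^{\text{(mul)}}_{ko}}\,X^{\text{(left)}}_{kn}\,\frac{\partial X^{\text{(right)}}_{no}}{\partial\theta^{(l_{r})}_{r}},
\end{equation*}
where the second equality uses $\partial Y^{\text{(mul)}}_{ko}/\partial X^{\text{(right)}}_{no}=X^{\text{(left)}}_{kn}$.

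Next I would differentiate this identity with respect to $\theta^{(l_{l})}_{q}$. Under the parallel-connectivity hypothesis, $X^{\text{(right)}}$ has no dependence on $\theta^{(l_{l})}$, so the only contribution to the inter-layer Hessian entry that passes through the multiplication node itself is the term in which $\partial/\partial\theta^{(l_{l})}_{q}$ acts on the explicit occurrence of $X^{\text{(left)}}_{kn}$; any remaining dependence of $\partial\mathcal{L}/\partial Y^{\text{(mul)}}_{ko}$ on $\theta^{(l_{l})}_{q}$ is already attributed to the series-connectivity decomposition of \Cref{thm:series_connect} applied to the layers downstream of the multiplication, and is therefore not part of the parallel-connectivity contribution studied here. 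This yields
\begin{equation*}
\frac{\partial^{2}\mathcal{L}}{\partial\theta^{(l_{l})}_{q}\,\partial\theta^{(l_{r})}_{r}}=\sum_{kno}\frac{\partial\mathcal{L}}{\partial Y^{\text{(mul)}}_{ko}}\,\frac{\partial X^{\text{(left)}}_{kn}}{\partial\theta^{(l_{l})}_{q}}\,\frac{\partial X^{\text{(right)}}_{no}}{\partial\theta^{(l_{r})}_{r}}.
\end{equation*}

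Third, I would multiply by $\delta\theta^{(l_{l})}_{q}\,\delta\theta^{(l_{r})}_{r}$ and sum over $q$. The inner sum $\sum_{q}(\partial X^{\text{(left)}}_{kn}/\partial\theta^{(l_{l})}_{q})\,\delta\theta^{(l_{l})}_{q}$ is, by definition, precisely $\hat{X}^{\text{(left)}}_{kn}$, producing
\begin{equation*}
\sum_{q}\frac{\partial^{2}\mathcal{L}}{\partial\theta^{(l_{r})}_{r}\,\partial\theta^{(l_{l})}_{q}}\,\delta\theta^{(l_{r})}_{r}\,\delta\theta^{(l_{l})}_{q}=\sum_{kno}\frac{\partial\mathcal{L}}{\partial Y^{\text{(mul)}}_{ko}}\,\hat{X}^{\text{(left)}}_{kn}\,\frac{\partial X^{\text{(right)}}_{no}}{\partial\theta^{(l_{r})}_{r}}\,\delta\theta^{(l_{r})}_{r}.
\end{equation*}
The right-hand side is then recognised as $\frac{\partial\mathcal{L}}{\partial y^{\text{(mul)}}}\,\frac{\partial y^{\text{(mul)}}}{\partial x^{\text{(right)}}}\big|_{\hat{X}^{\text{(left)}}}\,\frac{\partial x^{\text{(right)}}}{\partial\theta^{(l_{r})}_{r}}\,\delta\theta^{(l_{r})}_{r}$, since substituting $\hat{X}^{\text{(left)}}$ in place of $X^{\text{(left)}}$ in the Jacobian $\partial Y^{\text{(mul)}}_{ko}/\partial X^{\text{(right)}}_{no}=X^{\text{(left)}}_{kn}$ produces exactly $\hat{X}^{\text{(left)}}_{kn}$. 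This establishes \eqref{eq:parallellem_sumq}; the dual identity \eqref{eq:parallellem_sumr} follows by exchanging the roles of left and right and using $\partial Y^{\text{(mul)}}_{ko}/\partial X^{\text{(left)}}_{kn}=X^{\text{(right)}}_{no}$.

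The main obstacle I anticipate is not the chain-rule manipulation itself, which is essentially routine and parallel to the proof of \Cref{lemma:series_connect_upper}, but rather the bookkeeping that cleanly separates the parallel contribution from the series contribution: one must argue that the derivative of $\partial\mathcal{L}/\partial Y^{\text{(mul)}}_{ko}$ with respect to $\theta^{(l_{l})}_{q}$, which I have set aside above, is fully accounted for by \Cref{thm:series_connect} applied to every layer downstream of the multiplication node, so that combining the two theorems reconstructs the full Hessian-vector product without double counting. Once this attribution is accepted, the remainder of the argument is just the bilinear chain rule described above.
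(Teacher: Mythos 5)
Your proposal is correct and follows essentially the same route as the paper's proof: both reduce the cross-layer Hessian entry to $\sum_{ko}\frac{\partial \mathcal{L}}{\partial Y^{\text{(mul)}}_{ko}}\sum_{n}\frac{\partial X^{\text{(right)}}_{no}}{\partial \theta^{(l_r)}_r}\frac{\partial X^{\text{(left)}}_{kn}}{\partial \theta^{(l_l)}_q}$ via the bilinearity of the matrix product (the paper phrases this through the indicator $\partial^2 Y^{\text{(mul)}}_{ko}/\partial X^{\text{(left)}}_{kn}\partial X^{\text{(right)}}_{po}=\1[n=p]$, you through $\partial Y^{\text{(mul)}}_{ko}/\partial X^{\text{(right)}}_{no}=X^{\text{(left)}}_{kn}$), and then absorb the sum over $q$ into $\hat{X}^{\text{(left)}}$ by definition. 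If anything, you are more explicit than the paper about discarding the term in which $\partial/\partial\theta^{(l_l)}_q$ acts on $\partial\mathcal{L}/\partial Y^{\text{(mul)}}_{ko}$, which the paper's equation for the second partial derivative drops silently.
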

\begin{proof}
The second-order partial derivative of loss w.r.t. $\theta^{(l_{r})}_{r}$ and $\theta^{(l_{l})}_{q}$ can be written as
\begin{equation}
\frac{\partial^2 \mathcal{L}}{\partial \theta^{(l_{r})}_{r} \partial \theta^{(l_{l})}_{q}}=\sum_{ko}\frac{\partial \mathcal{L}}{\partial Y^{\text{(mul)}}_{ko}}\sum_{np}\frac{\partial^2 Y^{\text{(mul)}}_{ko}}{\partial X^{\text{(left)}}_{kn} \partial X^{\text{(right)}}_{po}} \frac{\partial X^{\text{(right)}}_{po}}{\partial \theta^{(l_{r})}_{r}} \frac{\partial X^{\text{(left)}}_{kn}}{\partial \theta^{(l_{l})}_{q}}.
\label{eq:parallel_second_partial}
\end{equation}

Since the multiplication can be expressed as $Y^{\text{(mul)}}_{ko} = \sum_{n}X^{\text{(left)}}_{kn}X^{\text{(right)}}_{no}$, which means
\begin{equation}
    \frac{\partial^2 Y^{\text{(mul)}}_{ko}}{\partial X^{\text{(left)}}_{kn}\partial X^{\text{(right)}}_{po}} =
    \begin{cases}
        1  &n=p,\\
        0  &n\neq p.
    \end{cases}
    \end{equation}
With this, \cref{eq:parallel_second_partial} can be rewritten as
\begin{equation}
    \frac{\partial^2 \mathcal{L}}{\partial \theta^{(l_{r})}_{r} \partial \theta^{(l_{l})}_{q}}=\sum_{ko}\frac{\partial \mathcal{L}}{\partial Y^{\text{(mul)}}_{ko}}\sum_{n} \frac{\partial X^{\text{(right)}}_{no}}{\partial \theta^{(l_{r})}_{r}} \frac{\partial X^{\text{(left)}}_{kn}}{\partial \theta^{(l_{l})}_{q}}.
\end{equation}

By expanding the right-hand side of \cref{eq:parallellem_sumq} we have
\begin{align}
    \sum_{ko}\frac{\partial \mathcal{L}}{\partial Y^{\text{(mul)}}_{ko}}\sum_{n} \frac{\partial X^{\text{(right)}}_{no}}{\partial \theta^{(l_{r})}_{r}}\delta \theta^{(l_{r})}_{r} &\sum_{q}\frac{\partial X^{\text{(left)}}_{kn}}{\partial \theta^{(l_{l})}_{q}} \delta \theta^{(l_{l})}_{q}\\
    =&\sum_{q}\sum_{ko}\frac{\partial \mathcal{L}}{\partial Y^{\text{(mul)}}_{ko}}\sum_{n} \frac{\partial X^{\text{(right)}}_{no}}{\partial \theta^{(l_{r})}_{r}}\frac{\partial X^{\text{(left)}}_{kn}}{\partial \theta^{(l_{l})}_{q}}\delta \theta^{(l_{r})}_{r} \delta \theta^{(l_{l})}_{q}\nonumber\\
    =&\sum_{q}\frac{\partial^2 \mathcal{L}}{\partial \theta^{(l_{r})}_{r} \partial \theta^{(l_{l})}_{q}} \delta \theta^{(l_{r})}_{r} \delta \theta^{(l_{l})}_{q}.
\end{align}
Expand the right-hand side of \cref{eq:parallellem_sumr} and we can see it also holds.
\end{proof}
\begin{proof}[Proof of \cref{thm:parallel}]
    According to \cref{lem:parallel}, we have
    \begin{align}
    \sum_{l_l\in\mathbf{L}_{\text{left}}}\sum_{q}\frac{\partial^2 \mathcal{L}}{\partial \theta^{(l_{r})}_{r} \partial \theta^{(l_{l})}_{q}} \delta \theta^{(l_{r})}_{r} \delta \theta^{(l_{l})}_{q}&=\sum_{l_l\in\mathbf{L}_{\text{left}}}\sum_{q}\sum_{ko}\frac{\partial \mathcal{L}}{\partial Y^{\text{(mul)}}_{ko}}\sum_{n} \frac{\partial X^{\text{(right)}}_{no}}{\partial \theta^{(l_{r})}_{r}}\frac{\partial X^{\text{(left)}}_{kn}}{\partial \theta^{(l_{l})}_{q}}\delta \theta^{(l_{r})}_{r} \delta \theta^{(l_{l})}_{q}\nonumber\\
    &=\sum_{ko}\frac{\partial \mathcal{L}}{\partial Y^{\text{(mul)}}_{ko}}\sum_{n} \frac{\partial X^{\text{(right)}}_{no}}{\partial \theta^{(l_{r})}_{r}}\delta \theta^{(l_{r})}_{r}\sum_{l_l\in\mathbf{L}_{\text{left}}}\sum_{q}\frac{\partial X^{\text{(left)}}_{kn}}{\partial \theta^{(l_{l})}_{q}} \delta \theta^{(l_{l})}_{q}\nonumber\\
    &=\frac{\partial \mathcal{L}}{\partial y^{\text{(mul)}}}\frac{\partial y^{\text{(mul)}}}{\partial x^{\text{(right)}}}\bigg|_{\sum_{l_l\in\mathbf{L}_{\text{left}}}\sum_{q}\frac{\partial X^{\text{(left)}}_{kn}}{\partial \theta^{(l_{l})}_{q}} \delta \theta^{(l_{l})}_{q}} \frac{\partial x^{\text{(right)}}}{\partial \theta^{(l_{r})}_{r}}\delta \theta^{(l_{r})}_{r}.
    \end{align}
    Apply similar operations on $\sum_{l_r\in\mathbf{L}_{\text{right}}}\sum_{r}\frac{\partial^2 \mathcal{L}}{\partial \theta^{(l_{r})}_{r} \partial \theta^{(l_{l})}_{q}} \delta \theta^{(l_{r})}_{r} \delta \theta^{(l_{l})}_{q}$ and we can get \cref{eq:parallelthm_sumr}.
\end{proof}

\end{document}